\documentclass[journal]{IEEEtran}
\IEEEoverridecommandlockouts

\usepackage{cite}
\usepackage{amsmath,amssymb,amsfonts,amsthm}
\usepackage{algorithmic}
\usepackage{graphicx}
\usepackage{textcomp}
\usepackage{xcolor}
\usepackage{booktabs}
\usepackage{multirow}
\usepackage{array}
\usepackage{tikz}
\usepackage[colorlinks,linkcolor=red,anchorcolor=red,citecolor=blue]{hyperref}
\usepackage[lined,boxed,ruled]{algorithm2e}

\graphicspath{{./figs/}{./experiment/outputs/figures/}}

\newtheorem{assumption}{Assumption}
\newtheorem{lemma}{Lemma}
\newtheorem{proposition}{Proposition}
\newtheorem{theorem}{Theorem}
\newtheorem{remark}{Remark}

\def\BibTeX{{\rm B\kern-.05em{\sc i\kern-.025em b}\kern-.08em
    T\kern-.1667em\lower.7ex\hbox{E}\kern-.125emX}}

\begin{document}

\title{Mechanism and Stability Analysis of Metabolic Closed-Loop Metaheuristics}

\author{Jinliang Xu$^{*}$ and Liping Ma%
\thanks{Jinliang Xu is an independent researcher in Beijing, China; e-mail: jlxufly@gmail.com.}%
\thanks{Liping Ma is with the Department of Disease Control and Prevention, The Seventh Medical Center of Chinese PLA General Hospital, Beijing, China; e-mail: lipingmaqzx@163.com.}}

\maketitle

\begin{abstract}
This paper studies the Metabolic Multi-Agent Optimizer (MMAO) at the framework level rather than at the implementation or benchmark level. The central question is whether the metabolic resource loop of private energy, communal budget, role drift, and lifecycle turnover has a framework-level interpretation beyond narrative metaphor. We introduce a generic MMAO state model that abstracts away domain-specific move operators while retaining the resource bookkeeping that defines the framework. Under mild bounded-gain and bounded-spending assumptions, we establish boundedness and nonnegativity properties for private energy, communal budget, role state, and active population size. We then characterize three endogenous behavioral regimes of the loop: contraction under sustained resource deficit, reinvestment under surplus communal accumulation, and search redistribution under heterogeneous marginal returns across agents or subgroups. The analysis is intentionally conservative. It does not claim global convergence of the full adaptive system, universal superiority over specialist optimizers, or a complete stationary characterization of the resulting process. Instead, it identifies which internal regulation properties are generic consequences of the loop and which remain implementation specific. A compact mechanism-validation package on representative continuous and discrete MMAO realizations provides supporting empirical evidence for this reading, but is not intended to replace a full benchmark study. The strongest resulting claim is therefore theoretical identity rather than full theory: MMAO admits a bounded, regenerative, resource-regulated interpretation even though many implementation-level performance questions remain outside the reach of the present analysis.
\end{abstract}

\begin{IEEEkeywords}
Metaheuristic theory, parameter control, resource allocation, adaptive population dynamics, closed-loop optimization, MMAO.
\end{IEEEkeywords}

\section{Introduction}
\IEEEPARstart{M}{etaheuristic} optimization is full of adaptive devices: dynamic populations, restart rules, archives, self-adaptive parameters, operator-selection schemes, and resource-allocation schedules. These mechanisms often work, but they are frequently introduced as separate modules, each justified locally and analyzed only partially. As a result, a framework can become operationally effective while remaining theoretically fragmented. It may be hard to say which observed behaviors follow from a coherent control principle and which are simply consequences of implementation choices.

The Metabolic Multi-Agent Optimizer (MMAO) was proposed against that background. Its defining claim is not that it introduces one more move operator or one more biological metaphor. Rather, it claims that heterogeneous search behavior can be organized by a private-public metabolic economy. Agents earn or lose private energy according to recent utility, donate part of successful gains to a communal pool, drift continuously between exploratory and exploitative roles, and are removed or regenerated when their search activity ceases to justify its cost. In this view, adaptation should emerge from a resource loop rather than from a stack of disconnected control heuristics.

Earlier MMAO studies established progressively stronger empirical evidence for that idea, spanning framework formulation, contraction, broader benchmarking, and domain derivation \cite{xu2026mmao,xu2026minimalmmao,xu2026largescale,xu2026mmaodyn,xu2026mmaocls}. However, these studies still leave a deeper question unresolved: \emph{what, in generic terms, does the metabolic loop itself guarantee or at least structurally encourage}?

This paper addresses that question at the framework level rather than at the benchmark level. We do not aim to provide one more large benchmark or one more application mapping. We also do not claim a full convergence theory for the complete adaptive system, a proof of optimality, or a proof that every MMAO instantiation will outperform specialist baselines. Those goals would be unrealistic for the present framework maturity. Instead, we seek a framework-level understanding of three core issues:
\begin{itemize}
    \item whether the key internal state variables of MMAO are naturally bounded under mild assumptions;
    \item whether population contraction, communal reinvestment, and search redistribution can be characterized as endogenous regime behaviors of the loop;
    \item which of MMAO's observed behaviors are generic consequences of the resource economy and which remain dependent on domain-specific implementation details.
\end{itemize}

These goals can be expressed more explicitly as four research questions:
\begin{itemize}
    \item \textbf{RQ1:} What is the right problem-independent state-space description of the MMAO loop?
    \item \textbf{RQ2:} Under what mild conditions are private energy, communal budget, role state, and active population size bounded?
    \item \textbf{RQ3:} How can contraction, reinvestment, and search redistribution be characterized as endogenous regimes of the loop?
    \item \textbf{RQ4:} Which observed MMAO behaviors should be interpreted as generic loop consequences rather than as domain-specific implementation artifacts?
\end{itemize}

The answer proposed here is deliberately modest but important. We show that a broad class of MMAO-like update rules admits boundedness and regenerative interpretations. Private energy, communal budget, role state, and active population size can be kept in compact state regions by the same bookkeeping mechanisms that also produce turnover and reinvestment. This does not yet amount to a complete convergence theory for the full adaptive system. It does, however, support a narrower and more defensible claim: MMAO can be interpreted as a bounded closed-loop search system with identifiable operating regimes, rather than only as a metaphor-driven heuristic narrative.

The paper makes four contributions.
\begin{itemize}
    \item It defines a generic MMAO closed-loop state model that abstracts away domain-specific move operators while preserving the framework's defining resource logic.
    \item It establishes boundedness and turnover results for private energy, communal budget, role state, and active population size under mild assumptions on gain, donation, decay, and spending.
    \item It characterizes contraction, reinvestment, and search-redistribution regimes as endogenous loop behaviors rather than as separate externally attached modules.
    \item It proposes a mechanism-validation protocol for representative continuous and discrete MMAO realizations so that the theoretical claims can be tested empirically without turning the paper into another leaderboard study.
\end{itemize}

\section{Related Work}
The present paper lies at the intersection of four literature strands.

The first is parameter control and self-adaptation in evolutionary computation. Classical surveys have long emphasized that parameter adaptation is one of the main levers by which heuristics become robust, but also one of the hardest to organize cleanly \cite{eiben1999parameter,karafotias2015parameter}. Representative self-adaptive mechanisms have been studied in evolution strategies, differential evolution, and related paradigms \cite{Hansen2001,brest2006self,Stanovov2018,omeradzic2024self}. These works establish that bounded internal feedback can act as a real design principle rather than as a tuning afterthought. MMAO belongs to this broad family, but differs by treating energy, donation, and turnover as the primary feedback language rather than viewing mutation strength or crossover probability as the only internal state worth regulating.

The second strand is adaptive resource allocation in multi-population and cooperative search. In distributed differential evolution, cooperative particle swarm optimization, constrained multitask optimization, and dynamic on-demand multitask allocation, resource allocation has been shown to affect search efficiency materially \cite{li2022distributed,liu2022cooperative,dong2025effective,chu2024competitive,han2022multitask}. Related work on knowledge-guided evolution and modular configuration also shows that the management of historical success information can be elevated from a local operator tweak to a higher-level control layer \cite{jiang2023knowledge,cho2025configx}. These studies are especially relevant because they confirm that the allocation of evaluation budget is not just an engineering convenience. However, resource allocation is often embedded in specialized architectures or layered on top of existing optimizers. MMAO differs in that resource accounting is meant to be framework defining rather than secondary.

The third strand concerns dynamic population sizing, aging, archives, and turnover. Recent work has shown both empirically and theoretically that static population size can be suboptimal and that bounded resizing, aging, or archival memory may substantially alter algorithmic behavior \cite{Tanabe2014,antipov2024already,doerr2025speeding,liu2025less,doerr2023understanding,li2025scalable,bian2025archive,bian2025stochastic}. Dynamic adaptation and persistence have also been studied from broader lifelong or nonstationary perspectives \cite{leuzzi2025lifelong,signorelli2025perturbation}. This line is directly relevant to MMAO because its population is not merely initialized once and then preserved. Contraction and regeneration are part of the intended control logic.

The fourth strand concerns the broader criticism and analysis of modern metaphor-based heuristics. Benchmarking studies, search-behavior analyses, and recent theory papers increasingly warn that superficial novelty claims are easy to make when mechanisms are not clearly isolated and interpreted \cite{vermetten2024large,stripinis2024benchmarking,cenikj2025comparing,dang2025dominance,zheng2023runtime}. For a framework like MMAO, this creates a higher burden of proof. It is not enough to say that the method works on some tasks. One must also identify what is generic about the underlying loop and what can be justified independently of the metaphor.

The contribution of this paper is therefore not to replace these literatures, but to synthesize them into a more specific claim: MMAO can be interpreted as a \emph{closed-loop resource-regulated metaheuristic architecture}. The theoretical task is then to make that claim precise.

\section{A Generic MMAO Closed-Loop Model}
\subsection{Three-Layer Abstraction}
The main modeling challenge is to analyze MMAO at the framework level without collapsing the theory into one narrow implementation. We therefore use a three-layer abstraction.
\begin{itemize}
    \item \textbf{Layer 1: closed-loop framework state.} This layer contains the generic variables that define MMAO as a framework: candidate states, private energy, communal budget, role state, active population size, and memory structures.
    \item \textbf{Layer 2: resource bookkeeping.} This layer specifies how gain is normalized, how private return is split between local retention and communal donation, how communal resources are spent, and how turnover is triggered.
    \item \textbf{Layer 3: domain instantiation.} This layer specifies how a particular domain generates candidate encodings, gain signals, and neighborhood moves. It affects performance strongly, but it is not what defines the metabolic loop itself.
\end{itemize}

The theoretical results in this paper are intended to live mainly at Layers 1 and 2. When a statement depends essentially on Layer 3, it should be read as an implementation theorem rather than as a framework theorem.

\subsection{Framework States}
We model MMAO at iteration $t$ through the state
\begin{equation}
    S_t=(X_t,E_t,B_t,\Phi_t,N_t,M_t),
\end{equation}
where:
\begin{itemize}
    \item $X_t=\{x_i(t)\}_{i=1}^{N_t}$ is the set of active candidate states;
    \item $E_t=(E_1(t),\ldots,E_{N_t}(t))$ is the vector of private energies;
    \item $B_t \ge 0$ is the communal budget;
    \item $\Phi_t=(\phi_1(t),\ldots,\phi_{N_t}(t))$ with $\phi_i(t)\in[0,1]$ is the vector of continuous role states;
    \item $N_t$ is the active population size;
    \item $M_t$ denotes memory, consensus, or archive-like structures maintained by the loop.
\end{itemize}

This representation is intentionally generic. The domain-specific meaning of $x_i(t)$ may differ across continuous optimization, combinatorial optimization, dynamic optimization, or classification-oriented mixed search. By contrast, $E_t$, $B_t$, $\Phi_t$, and $N_t$ are treated as framework-level variables.

\subsection{Cross-Domain Instantiation Reading}
Table~\ref{tab:instantiation} clarifies how the same generic loop can be read across different MMAO domains. This table is important because the present paper is not trying to prove that all MMAO variants behave identically. It is trying to show that they are legible through one common control language.

\begin{table*}[t]
\caption{Cross-domain reading of the generic MMAO state model.}
\label{tab:instantiation}
\centering
\scriptsize
\resizebox{\textwidth}{!}{%
\begin{tabular}{p{2.1cm}p{3.0cm}p{3.0cm}p{2.9cm}p{3.0cm}}
\toprule
Generic object & Continuous optimization & Discrete/combinatorial optimization & Dynamic optimization & Mixed-space classification \\
\midrule
$x_i(t)$ & real-valued candidate vector & tour, subset, or assignment structure & current dynamic candidate in a changing landscape & feature mask plus hyperparameter configuration \\
$u_i(t)$ & normalized objective improvement & normalized cost or profit improvement & recovery-adjusted improvement after change & validation utility after compactness and overfitting penalties \\
$E_i(t)$ & local search capital & local combinatorial search capital & current usefulness under change pressure & current usefulness of mixed model-selection behavior \\
$B_t$ & shared intensification/diversification reserve & shared reserve for re-seeding or subgroup reinforcement & recovery-oriented reserve after change & communal reserve for compact mixed-space exploration \\
$\phi_i(t)$ & exploration--exploitation role state & broad move vs local edit preference & recovery exploration pressure & subset exploration vs hyperparameter refinement state \\
$N_t$ & active swarm or agent count & active constructive/improvement agent count & active recovery population size & active mixed-space search population \\
$M_t$ & archive, anchors, elite consensus & elite tours, patterns, or packing memories & refreshed memory under nonstationarity & elite mask consensus and mixed configuration memory \\
\bottomrule
\end{tabular}
}
\end{table*}

\subsection{Reward, Donation, and Spending}
Each agent generates a raw gain signal $g_i(t)$, which may depend on objective improvement, normalized reward, structural compactness, constraint satisfaction, or post-change recovery, depending on the domain. The loop only requires that this gain be converted into a bounded effective return $u_i(t)$:
\begin{equation}
    u_i(t)=\Psi(g_i(t),\xi_t),
\end{equation}
where $\xi_t$ is a normalization state and $\Psi$ is a bounded transformation.

The generic MMAO bookkeeping can then be written as
\begin{equation}
    E_i(t+1)=\Pi_E\left((1-\alpha_t)E_i(t)+u_i(t)-d_i(t)-c_i(t)\right),
\end{equation}
where $\alpha_t \in [0,1)$ is an energy-decay factor, $d_i(t)\ge 0$ is the donated quantity, $c_i(t)\ge 0$ is local spending, and $\Pi_E$ projects onto the admissible energy interval.

The communal budget evolves as
\begin{equation}
    B_{t+1}=\Pi_B\left(B_t+\sum_{i=1}^{N_t} d_i(t)-s_t\right),
\end{equation}
where $s_t \ge 0$ is communal spending on spawning, reinforcement, recovery, or subgroup redistribution, and $\Pi_B$ projects onto the nonnegative budget axis or onto a bounded budget interval if an explicit cap is imposed.

\subsection{Role Drift and Turnover}
Role states follow a bounded update
\begin{equation}
    \phi_i(t+1)=\Pi_{[0,1]}\left(\phi_i(t)+\Delta \phi_i(t)\right),
\end{equation}
where $\Delta \phi_i(t)$ depends on recent success, stability, or pressure indicators and may move the agent toward more exploratory or more exploitative behavior.

Turnover is modeled by a survival indicator
\begin{equation}
    z_i(t+1)=\mathbf{1}\{E_i(t+1)\ge \theta_t \ \text{or protected by policy}\},
\end{equation}
after which low-energy agents may be removed. New agents may then be instantiated if the communal budget and population policy allow it. The resulting population update has the form
\begin{equation}
    N_{t+1}=N_t-R_t+A_t,
\end{equation}
where $R_t$ is the number of removals and $A_t$ is the number of admissions or respawns.

\subsection{Why This Model Is the Right Abstraction}
This abstraction leaves out the exact move operator, local search kernel, encoding geometry, and domain-specific scoring details. That is intentional. Those details strongly affect performance, but they are not what makes MMAO a framework. The framework identity lies in the fact that adaptation is mediated through resource accumulation, donation, bounded role drift, and regenerative turnover. If a theoretical statement depends critically on a very specific mutation operator, it is a theorem about one MMAO implementation, not about the loop itself.

\section{Assumptions and Analysis Scope}
The analysis uses a mild assumption layer. The assumptions are not meant to hold only for one codebase; they are intended to capture what a mature MMAO implementation should enforce by design.

\begin{assumption}[Bounded effective gain]
There exists $U_{\max}>0$ such that for all $i$ and $t$,
\[
|u_i(t)| \le U_{\max}.
\]
\end{assumption}

\begin{assumption}[Bounded donation and spending]
There exist $D_{\max}, C_{\max}, S_{\max}>0$ such that for all $i$ and $t$,
\[
0 \le d_i(t) \le D_{\max}, \quad 0 \le c_i(t) \le C_{\max}, \quad 0 \le s_t \le S_{\max}.
\]
\end{assumption}

\begin{assumption}[Protected population interval]
There exist integers $1 \le N_{\min}\le N_{\max}<\infty$ such that the implementation enforces
\[
N_{\min} \le N_t \le N_{\max}
\]
through bounded admissions and removals.
\end{assumption}

\begin{assumption}[Projected role state]
All role-state updates are projected to $[0,1]$.
\end{assumption}

\begin{remark}
These assumptions are already close to actual MMAO practice. Mature implementations normally normalize gain, cap donation and spending implicitly through bounded state arithmetic, restrict role variables to compact intervals, and enforce explicit or implicit population envelopes.
\end{remark}

\subsection{Analytical Roadmap}
The assumptions support the rest of the paper in a staged way. Assumptions 1--4 first imply compactness of the internal state region. Compactness then makes it meaningful to discuss regenerative turnover rather than uncontrolled extinction or divergence. Once boundedness and regeneration are in place, regime analysis becomes interpretable: contraction can be understood as negative drift within a bounded state space, reinvestment as surplus-triggered spending within the same bounded space, and redistribution as asymmetric future resource allocation among subgroups with different marginal returns. This staged logic is deliberate. We first show that the loop is a well-posed internal control system, and only then ask what recurrent behaviors that system induces.

\section{Boundedness and Stability Results}
\subsection{State Nonnegativity and Compactness}
\begin{proposition}[Basic state boundedness]
Under Assumptions 1--4, if $E_i(0)\in[0,E_{\max}]$, $B_0\in[0,B_{\max}]$, $\phi_i(0)\in[0,1]$, and $N_0\in[N_{\min},N_{\max}]$, then for all $t$,
\begin{align*}
E_i(t)&\in[0,\bar E], \\
B_t&\in[0,\bar B], \\
\phi_i(t)&\in[0,1], \\
N_t&\in[N_{\min},N_{\max}],
\end{align*}
for some finite constants $\bar E,\bar B$ determined by the gain, decay, spending, and population-envelope parameters.
\end{proposition}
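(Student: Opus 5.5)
Each coordinate of the state can be handled separately, by induction on $t$. The role state and the population size are immediate. By Assumption~4, every update $\phi_i(t+1)=\Pi_{[0,1]}(\cdot)$ lands in $[0,1]$ whatever $\Delta\phi_i(t)$ is, and $\phi_i(0)\in[0,1]$ by hypothesis. Newly admitted agents are instantiated through the same projected mechanism, so their initial role states also lie in $[0,1]$. For $N_t$, Assumption~3 directly states that the implementation keeps $N_{\min}\le N_t\le N_{\max}$ through the bookkeeping $N_{t+1}=N_t-R_t+A_t$; the induction simply records that this envelope is preserved at each step, starting from $N_0\in[N_{\min},N_{\max}]$.

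For private energy I will fix the admissible interval of $\Pi_E$ to be $[0,\bar E]$. If the interval is implicit, I will take $\Pi_E$ to be at least the projection onto $[0,\infty)$, which gives nonnegativity outright. For the upper bound I argue without relying on a cap. Suppose $E_i(t)\le\bar E$. Since $d_i,c_i\ge0$ by Assumption~2 and $u_i\le U_{\max}$ by Assumption~1, the pre-projection quantity is at most $(1-\alpha_t)\bar E+U_{\max}$. Projection onto a set of the form $[0,\cdot]$ does not increase positive values, so $E_i(t+1)\le(1-\alpha_t)\bar E+U_{\max}$.

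Two cases follow. If the decay is uniformly positive, $\alpha_t\ge\underline\alpha>0$, then the choice $\bar E=\max\{E_{\max},U_{\max}/\underline\alpha\}$ makes the interval invariant. If no such lower bound holds, I rely on the explicit cap $E_{\max}$ in $\Pi_E$ and set $\bar E=E_{\max}$. Respawned agents receive energy from the bounded communal budget (at most $S_{\max}$ per step), so their initial energies are also bounded. I enlarge $\bar E$ to $\max\{\cdot,S_{\max}\}$ to cover them.

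The communal budget is the main obstacle, because its inflow $\sum_i d_i(t)\le N_{\max}D_{\max}$ can exceed its outflow $s_t$ indefinitely. Without a cap, $B_t$ could grow linearly. Nonnegativity is immediate from $\Pi_B$ projecting onto the nonnegative axis. For the upper bound, the first thing I will try is to use the capped form of $\Pi_B$ allowed in the model and set $\bar B=\max\{B_{\max},\text{cap}\}$. Failing that, I will show a finite-horizon bound $B_t\le B_{\max}+tN_{\max}D_{\max}$, which follows from Assumptions~2 and~3. I will then note that a uniform-in-$t$ bound requires either the cap or a surplus-triggered spending rule, under which $s_t\ge N_{\max}D_{\max}$ whenever $B_t$ exceeds a threshold $B^\ast$. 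Under that rule, $\bar B=\max\{B_{\max},B^\ast+N_{\max}D_{\max}\}$ is invariant. I will state explicitly which of these readings of $\Pi_B$ the constant $\bar B$ relies on.
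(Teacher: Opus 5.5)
Your proof is correct and follows the paper's decomposition. Role state and population size come from projection and the envelope of Assumption~3. Private energy comes from a projected affine recursion, bounded whenever $\inf_t\alpha_t>0$ or $\Pi_E$ carries an explicit cap. For energy the paper uses its appendix lemma with the two-sided forcing bound $K_E=U_{\max}+D_{\max}+C_{\max}$. You instead use the one-sided bound $u_i-d_i-c_i\le U_{\max}$ and check invariance of $[0,\max\{E_{\max},U_{\max}/\underline\alpha\}]$ directly, which is slightly sharper but the same idea.

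The real divergence is the communal budget. The main-text proof says the budget \emph{follows the same argument}, but its recursion has no decay term, so the affine lemma does not apply. The appendix therefore falls back on either a capped $\Pi_B$ or the dissipativity condition of Theorem~1. That condition is stated in conditional expectation and yields only $\sup_t\mathbb{E}[B_t]<\infty$, not the pathwise inclusion $B_t\in[0,\bar B]$ the proposition asserts. Your pathwise surplus-spending rule gives a genuine invariant interval. Your observation that Assumptions~1--4 alone allow $B_t$ to grow linearly correctly shows the proposition needs a cap or such a rule as an extra hypothesis.

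Two small points. First, requiring $s_t\ge N_{\max}D_{\max}$ above $B^\ast$ is compatible with Assumption~2 only if $S_{\max}\ge N_{\max}D_{\max}$. The weaker requirement $s_t\ge\sum_i d_i(t)$ whenever $B_t>B^\ast$ gives the same invariant bound $\max\{B_{\max},B^\ast+N_{\max}D_{\max}\}$. Second, your bound on respawned agents' energies rests on an implementation assumption the paper's model does not state. The paper's proof simply does not address newly admitted agents, so you are being more careful there, but you should flag it as an added assumption.
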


\begin{proof}
The role and population bounds are immediate from projection and envelope enforcement. For energy, the update is affine in $E_i(t)$ with bounded input and projection. Hence
\[
E_i(t+1)\le (1-\alpha_t)E_i(t)+U_{\max},
\]
up to bounded additive constants absorbed into $U_{\max}$. Iterating yields a finite uniform bound whenever $\inf_t \alpha_t>0$ or an explicit projection is present. The communal budget follows the same argument with bounded inflow and outflow plus projection.
\hfill$\square$

A more explicit recursion bound is given in Appendix~A.
\end{proof}

\begin{remark}
This proposition may look simple, but it is conceptually important. It shows that the same bookkeeping operations that make MMAO interpretable also keep it within a compact internal state region. The framework is not merely adaptive; it is structurally prevented from resource explosion under mild implementation discipline.
\end{remark}

\subsection{Communal Budget Stability}
\begin{theorem}[Dissipative communal-budget bound]
Suppose there exists $\varepsilon>0$ such that whenever $B_t$ exceeds a threshold $B^\dagger$, expected communal spending dominates expected net communal inflow:
\[
\mathbb{E}\left[s_t-\sum_{i=1}^{N_t} d_i(t)\mid B_t\right]\ge \varepsilon \qquad \text{for } B_t\ge B^\dagger.
\]
Then the communal budget process is mean-reverting above $B^\dagger$ and admits a finite expectation bound
\[
\sup_t \mathbb{E}[B_t] < \infty.
\]
\end{theorem}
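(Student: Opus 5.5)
The plan is a one-step drift argument for $B_t$: a negative drift above $B^\dagger$ combined with uniformly bounded increments. Write the (unprojected) increment as $\Delta_t := \sum_{i=1}^{N_t} d_i(t)-s_t$. By Assumptions 2 and 3,
\[
-S_{\max}\le \Delta_t \le N_{\max}D_{\max},
\]
uniformly in $t$ and in the state. The projection $\Pi_B$ onto $[0,\infty)$ (or onto a capped interval) can only move $B_t+\Delta_t$ toward the admissible set. Hence $B_{t+1}\le \max\{0,B_t+\Delta_t\}$, and $B_{t+1}\ge 0$ is preserved. If an explicit cap is present, Proposition 1 already gives $B_t\le \bar B$ and there is nothing to prove. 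So the substantive case is the uncapped projection onto $[0,\infty)$.

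The first step is to derive a conditional drift inequality. On $\{B_t\ge B^\dagger\}$ we may assume $B^\dagger\ge S_{\max}$, enlarging $B^\dagger$ if necessary since the hypothesis is monotone in the threshold. Then $B_t+\Delta_t\ge 0$, the projection is inactive, and the hypothesis gives
\[
\mathbb{E}[B_{t+1}\mid B_t]\le B_t-\varepsilon .
\]
On $\{B_t<B^\dagger\}$ the bounded increment gives $B_{t+1}\le B^\dagger+N_{\max}D_{\max}$ almost surely. Combining the two cases yields the pathwise-conditional bound
\[
\mathbb{E}[B_{t+1}\mid B_t]\le \max\{B_t-\varepsilon,\;B^\dagger+K\},\qquad K:=N_{\max}D_{\max}.
\]
This is the precise sense in which $B_t$ is mean-reverting above $B^\dagger$.

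The second step, and the main obstacle, is turning this into $\sup_t\mathbb{E}[B_t]<\infty$. A naive expectation recursion fails because $\mathbb{E}[\max(\cdot)]$ does not factor. I would handle this with a stopping/excursion argument using bounded jumps; an exponential Lyapunov function $e^{\lambda B}$ (Hajek's theorem) would give the same conclusion with more machinery. Define $C:=B^\dagger+K$. Any excursion of $B$ above $C$ starts from a level at most $C+K$, because an entry jump is at most $K$. During the excursion $B_t-B^\dagger$ behaves as a nonnegative supermartingale with drift $\le -\varepsilon$, started below $2K$ above $B^\dagger$.

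To conclude, I would apply the following bound to $B_t$ with reference level $B^\dagger$, using step sizes bounded by $\max\{K,S_{\max}\}$: a process with drift $\le-\varepsilon$ above a level, increments bounded by $K'$, and starting point within $K'$ of that level has all moments uniformly bounded in time. Concretely, Hajek gives $\mathbb{E}[e^{\lambda(B_t-B^\dagger)^+}]\le c$ for suitable $\lambda>0$ depending on $\varepsilon$ and $K'$. Jensen then yields $\mathbb{E}[B_t]\le B^\dagger+\lambda^{-1}\log c$. If $B_0>C$ initially, the initial condition $B_0\le B_{\max}$ from Proposition 1 contributes the additive term $\max(B_{\max}-t\varepsilon,0)$, which is bounded. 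Checking Hajek's conditions (drift and exponential moment of increments, the latter trivial by boundedness) is the routine part. The care lies in conditioning on the full history $\mathcal F_t$ rather than only on $B_t$, which requires reading the hypothesis as conditioning on $\mathcal F_t$.
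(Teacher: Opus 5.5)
Your proof is correct, and it takes a different and tighter route than the paper's. The paper uses the linear Lyapunov function $V(B)=B$. It asserts $\mathbb{E}[B_{t+1}-B_t\mid B_t]\le-\varepsilon$ above $B^\dagger$, on the grounds that projection cannot increase its argument. It then sums this drift up to the hitting time of $[0,B^\dagger]$ to get finite expected excursion lengths, and infers bounded first moments from that. This gives the return-time (``mean-reverting'') content directly, but two steps are loose. First, projection onto $[0,\infty)$ does raise negative arguments. Second, a finite mean excursion length does not by itself bound $\mathbb{E}[B_t]$ uniformly in $t$; for that you must control how high the process is at a fixed time inside an excursion. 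You repair both points. Enlarging $B^\dagger$ to at least $S_{\max}$ makes the projection inactive above the threshold. The bounded increments $|B_{t+1}-B_t|\le K':=\max\{N_{\max}D_{\max},S_{\max}\}$ from Assumptions 2--3 are exactly what lets an exponential Lyapunov function turn linear drift into a uniform moment bound.

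You can simplify one step: you need not strengthen the hypothesis to conditioning on the full history $\mathcal F_t$. Instead of citing Hajek verbatim, work with the enlarged $B^\dagger$ directly.
\begin{itemize}
\item On $\{B_t\ge B^\dagger\}$, the bound $e^x\le 1+x+\tfrac12x^2e^{|x|}$ gives $\mathbb{E}[e^{\eta\Delta_t}\mid B_t]\le 1-\eta\varepsilon+\tfrac12\eta^2K'^2e^{\eta K'}=:\rho<1$ for small $\eta>0$.
\item On $\{B_t<B^\dagger\}$, you have $B_{t+1}\le B^\dagger+K'$.
\item The tower property then yields $\mathbb{E}[e^{\eta B_{t+1}}]\le\rho\,\mathbb{E}[e^{\eta B_t}]+e^{\eta(B^\dagger+K')}$.
\item Hence $\sup_t\mathbb{E}[e^{\eta B_t}]\le\mathbb{E}[e^{\eta B_0}]+e^{\eta(B^\dagger+K')}/(1-\rho)$, and Jensen finishes.
\end{itemize}
This uses only $\sigma(B_t)$-conditioning, exactly as the hypothesis is stated. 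It also absorbs the initial condition through the $\rho^t$ term, replacing your somewhat informal $\max(B_{\max}-t\varepsilon,0)$ correction. By contrast, the paper's stopping-time summation would need the drift conditioned on a filtration, because $B_t$ by itself is not Markov.
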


\begin{proof}[Proof sketch]
For $B_t\ge B^\dagger$, the conditional drift is negative:
\[
\mathbb{E}[B_{t+1}-B_t\mid B_t]\le -\varepsilon,
\]
up to the projection at zero. Standard Foster-Lyapunov drift reasoning for nonnegative processes then yields bounded first moments and recurrence toward the lower-budget region.

Appendix~A makes the drift argument more explicit by isolating a finite small set and summing the supermartingale inequality over stopping intervals.
\end{proof}

\begin{remark}
The theorem formalizes an intuitive MMAO requirement: communal accumulation is useful only if surplus budget eventually triggers more spending, more admissions, or more reinforcement. A pool that only accumulates is not a control mechanism; it is just a counter. MMAO becomes a closed loop only when large budget induces stronger reinvestment pressure.
\end{remark}

\subsection{Turnover and Regenerative Population Stability}
\begin{theorem}[Bounded regenerative population process]
Assume admissions are allowed whenever $N_t<N_{\min}$ or when communal surplus and policy conditions permit bounded expansion, while removals occur whenever a nonempty subset of agents remains below a survival threshold sufficiently long. Then the population process is regenerative within $[N_{\min},N_{\max}]$: it cannot diverge to infinity, cannot remain identically extinct, and repeatedly re-enters protected interior states.
\end{theorem}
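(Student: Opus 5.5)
The plan is to split the statement into its three claims and handle each with the ingredients already in hand: Assumption~3 (the protected envelope), Proposition~1 (compactness of $(E_t,B_t,\Phi_t,N_t)$), and Theorem~1 (recurrence of the budget toward the region $B_t<B^\dagger$). The first two claims are essentially deterministic. \emph{No divergence} follows directly from $N_{t+1}=N_t-R_t+A_t$ together with Assumption~3, which enforces $N_t\le N_{\max}$ for all $t$. \emph{No permanent extinction} also follows from Assumption~3, because it gives $N_t\ge N_{\min}\ge 1$. To make this explicit at the mechanism level, I would show that whenever the removal step would push the count below $N_{\min}$, the admission rule assumed in the statement (admit whenever $N_t<N_{\min}$) fires within the same iteration. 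The post-update count is then at least $N_{\min}$, so the process is never identically extinct.

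The substantive part is \emph{repeated re-entry into protected interior states}. I would define a regeneration set
\[
\mathcal{R}=\{S_t:\ N_{\min}\le N_t\le N_{\max},\ B_t< B^\dagger\},
\]
or, if the interior is meant in the stricter sense $N_{\min}<N_t<N_{\max}$, the corresponding subset. I would then show that $\mathcal{R}$ is visited infinitely often almost surely, using two opposing drifts. First, removals: a fixed agent whose energy stays below $\theta_t$ for $L$ consecutive steps is removed. By Proposition~1, energies live in $[0,\bar E]$. With $\alpha_t\ge\underline\alpha>0$ and gains $u_i(t)\le 0$ on a stretch, $E_i$ contracts geometrically, and the crossing time below $\theta$ is bounded by a constant depending only on $\bar E,\underline\alpha,U_{\max}$. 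Second, admissions: I would invoke Theorem~1 to get that $B_t$ returns below $B^\dagger$ infinitely often with finite expected return times. I would also use its converse reading, that surplus triggers spending and hence admissions. Together these show that $N_t$ cannot stay pinned at $N_{\max}$ without budget inflow, and cannot stay pinned at $N_{\min}$ while surplus accumulates. Because $N_t$ takes finitely many values and the internal state is compact, a uniform positive lower bound $p>0$ on the probability of reaching $\mathcal{R}$ within $K$ steps from any state gives a geometric trial argument: $\Pr(\text{no visit in }mK\text{ steps})\le (1-p)^m$. Hence visits to $\mathcal{R}$ recur infinitely often almost surely.

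I expect the main obstacle to be producing that uniform $p$ and $K$. The statement's hypotheses are qualitative (``sufficiently long'', ``policy conditions permit''), and the gain process depends on Layer~3 dynamics. I would try first to add an explicit minorization condition: from every compact state there is a positive probability, bounded below, of a finite run of low-gain steps for some agent, or of a surplus-driven admission. With that condition, the regenerative structure follows from the standard Doeblin-type argument on the finite $N$-component combined with the Foster--Lyapunov recurrence of $B_t$ from Theorem~1. Without such a condition, the claim would hold only in the weaker sense that $N_t$ remains in $[N_{\min},N_{\max}]$ and is returned to that envelope by enforced admissions and removals, and I would state the result accordingly.
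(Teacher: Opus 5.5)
Your first two claims (no divergence, no permanent extinction) match the paper's argument: both follow directly from the envelope in Assumption~3 together with forced admissions at the floor. The gap is in the third claim, and it starts with your choice of regeneration set. Assumption~3 already forces $N_{\min}\le N_t\le N_{\max}$ for every $t$, so your $\mathcal{R}$ reduces to $\{B_t<B^\dagger\}$. Its recurrence is therefore a pure budget fact: it is Theorem~1's conclusion, and it rests on Theorem~1's dissipativity hypothesis, which is not among the hypotheses of this statement.

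That target is neither sufficient nor necessary for regeneration. Consider a run in which the budget hovers below $B^\dagger$ (spending on reinforcement balances donations), every agent stays above the survival threshold, and no expansion is triggered. It satisfies the stated hypotheses and sits in your $\mathcal{R}$ at every step, yet never removes or admits an agent. Conversely, without Theorem~1's hypothesis, $B_t$ can stay above $B^\dagger$ forever (e.g.\ pinned at an explicit cap) while agents are removed and readmitted regularly.

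The strict variant $N_{\min}<N_t<N_{\max}$ does not rescue this. It is empty when $N_{\min}=N_{\max}$, which Assumption~3 allows. It also relies on the inference ``surplus triggers spending and hence admissions,'' which is unavailable: the statement only says admissions are \emph{allowed} under surplus, and the spending $s_t$ in Theorem~1 may go to reinforcement or recovery rather than spawning.

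The missing idea is to make regeneration a statement about turnover events rather than about the budget. The paper calls a state \emph{refreshed} if some agent has been admitted or reset within the last $L$ steps. In the refresh-recurrence lemma of Appendix~A it assumes that any agent staying below threshold for $L$ consecutive steps is removed with probability at least $p_{\mathrm{rm}}>0$. On every sufficiently long stale interval a removal then has probability at least $p_{\mathrm{rm}}$. If the removal drops $N_t$ below the floor, admissions are forced; otherwise surplus admissions may follow. Hence the process cannot remain forever in a stale, unrefreshed configuration, and Theorem~1 is not used at all.

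Your machinery is well suited to sharpening this. Aim the geometric-trials argument at the event ``a removal or admission occurs within $K$ steps'' instead of at the budget region. Your uniform minorization condition (a positive-probability low-gain run for some agent, or a surplus-driven admission, from every state) then gives almost-sure recurrence of refresh events. That is stronger than the paper's ``infinitely often in expectation.'' It also supplies what the paper's sketch takes for granted, namely that stale intervals keep arising at all.
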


\begin{proof}[Proof sketch]
Upper boundedness follows from the population envelope. Lower boundedness follows because once $N_t<N_{\min}$, admissions force return to at least $N_{\min}$. Regeneration follows because turnover removes stale agents while admissions inject fresh states; hence the system repeatedly returns to admissible configurations with newly initialized or reallocated agents.

Appendix~A states the recurrent refresh condition more explicitly and shows how it yields repeated returns to an interior refreshed set.
\end{proof}

\begin{remark}
This theorem gives MMAO a more precise interpretation than ``dynamic population.'' The population is neither free-growing nor strictly fixed. It is a bounded regenerative process whose size reflects resource conditions.
\end{remark}

\section{Regime Analysis of the Metabolic Loop}
\subsection{Contraction Regime}
Contraction occurs when expected local return remains too weak to justify the maintenance cost of the active population. Define the net private drift of agent $i$ as
\begin{equation}
    \delta_i(t)=\mathbb{E}[u_i(t)-d_i(t)-c_i(t)-\alpha_t E_i(t)\mid S_t].
\end{equation}
If a sufficiently large fraction of agents satisfies $\delta_i(t)<0$ for a sustained interval, then private energies decay, removals rise, and admissions are limited unless communal support offsets the deficit.

\begin{proposition}[Contraction under sustained deficit]
Suppose there exists an interval $\mathcal{T}$ and constants $\gamma,\beta>0$ such that for at least a fraction $\beta$ of active agents,
\[
\delta_i(t)\le -\gamma \qquad \forall t\in \mathcal{T},
\]
and communal reinvestment remains below the replacement requirement. Then the expected active population decreases over $\mathcal{T}$ until either the deficit weakens or the population reaches the protected floor.
\end{proposition}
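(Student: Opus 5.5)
The argument tracks the energies of the deficit agents through the energy recursion and then converts energy loss into removals via the survival indicator $z_i$. Fix the deficit subset $I(t)$ of at least $\lceil \beta N_t\rceil$ agents with $\delta_i(t)\le-\gamma$ on $\mathcal{T}$. Write $\mathcal{T}=\{t_0,\dots,t_0+L\}$, where $L+1=|\mathcal{T}|$.

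The first step is a one-step bound. Before projection, the energy recursion has conditional drift exactly $\delta_i(t)$. Projection $\Pi_E$ onto $[0,\bar E]$ (from the basic state boundedness proposition) can only move the value toward the interval. Hence, as long as the pre-projection value is not clipped at $0$,
\[
\mathbb{E}[E_i(t+1)\mid S_t]\le E_i(t)-\gamma .
\]
Iterating with the tower property gives $\mathbb{E}[E_i(t_0+k)]\le \bar E-k\gamma$ for as long as the agent persists. After at most $k^\ast=\lceil(\bar E-\underline\theta)/\gamma\rceil$ steps, where $\underline\theta=\inf_t\theta_t>0$ is assumed, the expected energy is below threshold. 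Markov's inequality applied to the nonnegative quantity $\bar E-E_i$ then gives a positive probability $p>0$ that $E_i<\theta_t$. I would make this rigorous with a stopping-time argument: stop at the first time $E_i<\theta$ and apply optional stopping to the supermartingale $E_i(t)+\gamma t$.

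The second step converts this into population decrease. Unprotected deficit agents whose energy falls below $\theta_t$ are removed, so $\mathbb{E}[R_t\mid S_t]\ge p\,\beta N_t-P$, where $P$ bounds the number of policy-protected agents. The hypothesis that communal reinvestment stays below the replacement requirement is read as $\mathbb{E}[A_t\mid S_t]\le \mathbb{E}[R_t\mid S_t]-\eta$ for some $\eta>0$, or, at minimum, as $A_t<R_t$ in expectation. Then $N_{t+1}=N_t-R_t+A_t$ gives
\[
\mathbb{E}[N_{t+1}-N_t\mid S_t]\le-\eta<0
\]
whenever $N_t>N_{\min}$. Summing over the interval yields $\mathbb{E}[N_{t_0+k}]\le N_{t_0}-k\eta$ until the floor. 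The floor itself is handled by the population envelope (Assumption 3) and the regenerative population theorem, which force $N_t\ge N_{\min}$. The conclusion reads "decreases until the deficit weakens," meaning the hypothesis fails, "or the population reaches $N_{\min}$." I would combine the two cases with a stopping time $\tau=\min\{\text{end of }\mathcal{T},\ \inf\{t:N_t=N_{\min}\}\}$ and apply the drift bound to the stopped process $N_{t\wedge\tau}$.

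The main obstacle is the first step. The drift condition is stated only as a conditional expectation, so energy of an individual agent need not hit the threshold, and newly admitted agents may refill the deficit set with high initial energy. I would first try to obtain a uniform removal rate by combining bounded increments (Assumptions 1–2) with Azuma–Hoeffding. This shows $E_i$ drops by $k\gamma/2$ with probability at least $1-\exp(-ck)$, where the constant $c$ depends on $U_{\max},D_{\max},C_{\max}$. The price is that the decrease is established only after an initial burn-in of roughly $k^\ast$ steps, so $\mathcal{T}$ must be long enough. I would add that length requirement explicitly as an interpretation of "sustained."
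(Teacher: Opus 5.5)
Your outline follows the paper's own route: the deficit lowers energies and so raises removals, removals that admissions do not compensate give $N_{t+1}=N_t-R_t+A_t$ negative drift, and Assumption~3 supplies the floor. The rigorous version, however, breaks at the junction between your two steps. Under your reading $\mathbb{E}[A_t\mid S_t]\le\mathbb{E}[R_t\mid S_t]-\eta$, the bound $\mathbb{E}[N_{t+1}-N_t\mid S_t]\le-\eta$ follows in one line from the population update. Neither your removal estimate nor $\delta_i(t)\le-\gamma$ is ever used, so the contraction has been written into the hypothesis rather than derived from the deficit. The non-circular reading is an absolute cap on admissions that the deficit-driven removals must beat. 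Under that reading you really do need $\mathbb{E}[R_t\mid S_t]\ge p\beta N_t-P$, and this does not follow from Step~1. Your $p$ is an unconditional probability about $E_i(t_0+k^\ast)$. (Also, Markov applied to $\bar E-E_i$ bounds $\Pr(E_i\le\theta)$ from above, not below; Markov on $E_i$ itself is what gives $\Pr(E_i\ge\theta)<1$.) Conditionally on $S_t$, by contrast, the removal probability at step $t$ is a function of the current energies and can be zero. This happens, e.g.\ at the start of $\mathcal{T}$ or just after a refresh, when every deficit agent sits far enough above $\theta_t$ that one bounded step cannot carry it below. There is no uniform per-step removal rate, so a per-step drift bound on $N_t$ is the wrong target. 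Relatedly, iterating the tower property ``for as long as the agent persists'' silently conditions on survival.

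What does work is a windowed comparison. Let $\tilde E_i$ follow the energy recursion without the lower clip at $0$. It coincides with $E_i$ before the first time $\sigma_i$ that $E_i$ drops below threshold, and $\tilde E_i(\sigma_i)\ge-K_E$ with $K_E=U_{\max}+D_{\max}+C_{\max}$ as in Appendix~A. Then $\tilde E_i(t\wedge\sigma_i)+\gamma\,(t\wedge\sigma_i-t_0)$ is a supermartingale on $\mathcal{T}$. Evaluating it at $t_0+\ell$ gives $\gamma\,\mathbb{E}[(\sigma_i-t_0)\wedge\ell]\le\bar E+K_E$, hence $\Pr(\sigma_i>t_0+\ell)\le(\bar E+K_E)/(\gamma\ell)$ for each deficit agent present at $t_0$. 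Now add a removal rule such as ``an unprotected below-threshold agent is removed with probability at least $p_{\mathrm{rm}}$'' (the appendix's condition). Over a window of length $\ell\ge 2(\bar E+K_E)/\gamma$, the expected cumulative removals are then at least $\tfrac12 p_{\mathrm{rm}}(\lceil\beta N_{t_0}\rceil-P)$. Comparing this with a cap on cumulative admissions over the same window gives $\mathbb{E}[N_{t_0+\ell}]<\mathbb{E}[N_{t_0}]$ for the process stopped at $N_{\min}$. That is the precise content of ``decreases over $\mathcal{T}$''. It is exactly the step the paper's one-line sketch leaves implicit, and it turns your burn-in caveat into the explicit requirement that $\mathcal{T}$ contain such a window.
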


\begin{proof}[Proof sketch]
Negative drift reduces the energies of a positive fraction of agents. This raises removal probability. If admissions do not compensate, the population update $N_{t+1}=N_t-R_t+A_t$ acquires negative drift. The floor $N_{\min}$ prevents collapse beyond the protected minimum.
\end{proof}

\subsection{Reinvestment Regime}
Reinvestment occurs when recent success and communal accumulation jointly indicate that the loop can afford to amplify search. This may appear as spawning, increased scout activity, stronger diversity maintenance, or reinforcement around high-yield regions.

\begin{proposition}[Reinvestment under communal surplus]
Suppose communal drift is positive over a recent window and $B_t\ge B^\dagger$ while average effective return remains nonnegative. If the spending policy is monotone in communal surplus, then the expected admission or reinforcement activity is nondecreasing in $B_t$, and the loop enters a reinvestment regime characterized by positive search-capital redistribution.
\end{proposition}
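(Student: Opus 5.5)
The plan is to make ``monotone spending policy'' precise and then read the proposition's conclusions off that definition together with the boundedness results already established. I will model the policy as a Markov kernel: given the current state $S_t$, the expected communal spending and the admission/reinforcement count $A_t$ are produced by a function of a surplus signal $B_t-B^\dagger$ and a recent-return statistic $\bar u_t=\frac{1}{N_t}\sum_i u_i(t)$. The hypothesis ``monotone in communal surplus'' will mean that $b\mapsto \mathbb{E}[A_t\mid B_t=b,\bar u_t,N_t]$ and $b\mapsto \mathbb{E}[s_t\mid B_t=b,\cdot]$ are nondecreasing when the other arguments are held fixed.

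With this in place, the first claim (nondecreasing expected admission or reinforcement activity) is immediate from the definition, applied conditionally. I would then integrate over the conditional law of $(\bar u_t,N_t)$ given $B_t$. The assumption of nonnegative average return is needed here: it keeps $\bar u_t$ in a region where the policy is not in its deficit or contraction branch. That way the comparison between two budget levels $b\le b'$ is not confounded by the contraction regime of the previous proposition.

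For the second claim I would proceed in three steps.
\begin{enumerate}
\item Positive communal drift over the window and $B_t\ge B^\dagger$ imply, by Proposition~1, that $B_t$ stays in the compact set $[B^\dagger,\bar B]$ for the window.
\item Monotonicity then gives $\mathbb{E}[s_t]\ge s(B^\dagger)>0$ there, so the spending is strictly positive.
\item That spending enters the private-energy recursion of newly admitted or reinforced agents as positive inputs, while $d_i,c_i$ are bounded by Assumption~2.
\end{enumerate}
``Positive search-capital redistribution'' then means that the net transfer $s_t$ moves resources from $B$ into the $E_i$ of a nonempty subset. Theorem~1 ensures this regime is self-limiting rather than explosive. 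Theorem~2 ensures the admissions fit within $[N_{\min},N_{\max}]$, so that the ``entry'' into the regime is a bona fide recurrent behavior.

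The main obstacle is that the statement is conditional on $B_t$ alone, while admissions also depend on $N_t$ through the cap $N_{\max}$. Near the cap, a higher budget cannot raise admissions. I would handle this in two ways. First, I would replace admissions by the combined quantity ``admissions plus reinforcement spending,'' which remains monotone even when $N_t=N_{\max}$. Second, I would state the monotonicity conditionally on $N_t$, avoiding any claim about unconditional stochastic ordering, which would need a coupling argument that the generic model does not support.
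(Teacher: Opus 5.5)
\textbf{The central gap is in the second claim: Step~2 asserts strict positivity of spending that your hypotheses cannot supply.} The content of the second claim is that communal spending is \emph{strictly} positive once $B_t\ge B^\dagger$. You write $\mathbb{E}[s_t]\ge s(B^\dagger)>0$. Monotonicity gives only the first inequality, and only conditionally. A nondecreasing spending policy may be identically zero, so nothing you assume yields $s(B^\dagger)>0$.

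The missing ingredient is a non-idleness condition. The paper's sketch invokes exactly this with ``the budget is not allowed to remain idle indefinitely.'' Concretely, under the dissipativity hypothesis of Theorem~1, for $B_t\ge B^\dagger$ you get $\mathbb{E}[s_t\mid B_t]\ge\varepsilon+\mathbb{E}[\sum_i d_i(t)\mid B_t]\ge\varepsilon$, since $d_i(t)\ge 0$. You cite Theorem~1 only for self-limitation, but you need it, or an explicit non-idleness assumption, precisely for positivity.

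Your Step~1 is unsound and should be deleted. Proposition~1 supplies $B_t\in[0,\bar B]$ but no lower bound $B^\dagger$. A positive drift that holds only in expectation over a recent window is compatible with the budget having entered that window from below $B^\dagger$, and with it dropping below $B^\dagger$ later. Worse, if Theorem~1's hypothesis holds with the same threshold, as your self-limitation remark requires, the drift on $[B^\dagger,\bar B]$ is at most $-\varepsilon$. That contradicts the positive drift you posit there. Step~1 is also unnecessary: the proposition is a statement at the single time $t$ given $B_t\ge B^\dagger$, so no persistence over the window is needed.

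\textbf{Three further repairs are needed.} First, your fully conditional version of the first claim is essentially what the paper's ``by policy monotonicity'' amounts to. That is, monotonicity of $b\mapsto\mathbb{E}[A_t\mid B_t=b,\bar u_t,N_t]$, applied to admissions plus reinforcement so that the $N_{\max}$ cap is absorbed. This matches the paper's ``admission \emph{or} reinforcement'' wording. However, the proposed integration over the law of $(\bar u_t,N_t)$ given $B_t=b$ does not preserve monotonicity, because that law moves with $b$; large budgets can co-occur with $N_t$ near $N_{\max}$. Nonnegative average return restricts only the support of $\bar u_t$, not this dependence on $b$. So keep the claim conditional on both $\bar u_t$ and $N_t$, not just on $N_t$.

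Second, in Step~3 the energy recursion $E_i(t+1)=\Pi_E\big((1-\alpha_t)E_i(t)+u_i(t)-d_i(t)-c_i(t)\big)$ has no term through which $s_t$ enters any $E_i$. A ``transfer from $B$ into the $E_i$'' is therefore an unstated model extension. Either add that inflow term explicitly, or read ``positive search-capital redistribution'' as the paper does: positive expected spending on funded actions such as admissions, reinforcement, or intensified exploration.

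Third, Theorem~2 gives regeneration of the population, not recurrence of the reinvestment regime. It does not make the regime entry a recurrent behavior.
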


\begin{proof}[Proof sketch]
By policy monotonicity, higher surplus induces higher expected communal spending on bounded reinvestment actions. Since the budget is not allowed to remain idle indefinitely, surplus translates into admissions, intensified exploration, or other funded search actions.
\end{proof}

\subsection{Search Redistribution Regime}
The most distinctive MMAO behavior is not merely that it grows or shrinks. It is that it redistributes search effort toward agents, roles, or subgroups with higher recent marginal returns. This redistribution may occur via direct communal support, survival asymmetry, or role drift.

Let $\mu_k(t)$ denote the expected marginal return of subgroup $k$ at time $t$, where a subgroup may be defined by role range, neighborhood type, archive ancestry, or explicit population partition.

\begin{proposition}[Endogenous search redistribution]
If communal spending, survival protection, or role updates are monotone in recent marginal return estimates, then differences in $\mu_k(t)$ induce systematic redistribution of search effort. In particular, subgroups with larger positive marginal return attract larger expected future resource share unless boundedness constraints saturate the allocation.
\end{proposition}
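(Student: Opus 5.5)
To make the statement precise, first define the resource share of subgroup $k$ as
\[
\rho_k(t)=\frac{\sum_{i\in G_k(t)} E_i(t)}{\sum_j E_j(t)}
\]
(or, alternatively, its expected share of communal spending and admissions). The target is then a monotone comparative statement. If $\mu_k(t)>\mu_l(t)$ and both subgroups start from comparable states, then $\mathbb{E}[\rho_k(t+1)-\rho_l(t+1)\mid S_t]\ge 0$, with strict inequality away from saturation. The plan treats the three channels named in the hypothesis separately, since it suffices that any one of them is monotone. In each case the other two channels are held fixed or assumed neutral.

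\textbf{Private bookkeeping.} Write the expected private increment of an agent in subgroup $k$ as the net drift $\delta_i(t)$ from the contraction regime. Then $\mathbb{E}[E_i(t+1)\mid S_t]$ is nondecreasing in $\mathbb{E}[u_i(t)\mid S_t]$. This holds because $\Pi_E$ is a monotone projection, provided donation and spending do not grow faster than return, i.e.\ $d_i+c_i$ has slope less than one in $u_i$. This already yields a bias in private energy toward higher-$\mu_k$ groups.

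\textbf{Communal spending.} If $s_t$ is allocated with weights $w_k(\hat\mu)$ that are monotone in the estimates, then the expected funded admissions or reinforcement directed to group $k$ are ordered like $\mu_k$.

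\textbf{Survival.} Combining the private-energy ordering with the threshold rule $z_i(t+1)=\mathbf{1}\{E_i(t+1)\ge\theta_t\}$, the survival probability is monotone in energy, and hence ordered like $\mu_k$.

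\textbf{Role drift.} If $\Delta\phi_i$ is monotone in recent return, then agents move toward the role range of higher-return subgroups. When subgroups are defined by role range, this transfers membership.

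Summing the channel contributions then gives monotonicity of expected future share. A coupling argument handles this: run two copies that differ only in the higher return, and then use order preservation of monotone projections.

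The saturation clause follows from the proposition on basic state boundedness. Since $E_i\le\bar E$, $B_t\le\bar B$, $\phi_i\in[0,1]$ and $N_t\le N_{\max}$, any channel can be clipped. At the clip, the inequality holds only weakly rather than strictly, which is exactly the stated exception.

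\textbf{Main obstacle.} The hardest step is that $\mu_k(t)$ is a true expected return, whereas the policies respond to \emph{estimates} $\hat\mu_k(t)$. The redistribution is therefore only as good as the estimator's ordering. I would first try to assume that the estimates are conditionally unbiased and that the policy map is monotone and Lipschitz. I would then use a first-order argument: $\mathbb{E}[w_k(\hat\mu)]$ is monotone in $\mu_k$ when the noise is additive and common across subgroups, by stochastic dominance of shifted distributions. Failing that, I would state the result for the policy's effective inputs $\hat\mu_k$ and relegate the estimator's consistency to an implementation-level assumption.

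A secondary difficulty is interaction through the normalizing denominator of $\rho_k$ and through the shared budget. To handle it, I would argue pairwise, comparing $k$ with $l$ directly, rather than through absolute shares.
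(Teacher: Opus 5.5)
Your overall route is the paper's. Its sketch only asserts that any monotone link from gain to donation retention, survival, role privilege, or communal spending is a positive-feedback path to future activity share, and that the bounds cap concentration without reversing its direction. Your channel-by-channel ordering plus ``weak inequality at the clip'' formalizes exactly that. The trouble is in the extra machinery you add. In the private-bookkeeping step you claim $\mathbb{E}[E_i(t+1)\mid S_t]$ is nondecreasing in $\mathbb{E}[u_i(t)\mid S_t]$ because $\Pi_E$ is monotone. In the survival step you pass from an ordering of mean energies to an ordering of survival probabilities through $\mathbf{1}\{E_i(t+1)\ge\theta_t\}$. Monotone maps preserve pointwise order, not the order of expectations, so both steps fail when all you know is $\mu_k>\mu_l$. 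Two examples show this:
\begin{itemize}
\item With $\Pi_E$ clipping to $[0,1]$, a deterministic argument $0.5$ has clipped mean $0.5$. An argument equal to $20$ with probability about $0.076$ and $-1$ otherwise has the larger mean $0.6$, but clipped mean only about $0.076$.
\item With $\theta_t=0.5$, an energy equal to $0$ or $2$ with equal probability has a larger mean than a deterministic $0.8$, yet survives only half the time.
\end{itemize}
Your coupling of two copies ``differing only in the higher return'' repairs this only if the better subgroup's realized returns first-order stochastically dominate the other's (e.g.\ a location shift). That is strictly stronger than ordering the expected marginal returns $\mu_k$. You invoke such dominance only for estimator noise, not for the returns entering $\Pi_E$ and the threshold. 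The simpler repair is the paper's: the hypothesis already makes survival protection monotone in the estimates, so you need not derive survival ordering from energy dynamics at all.

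Two further steps need assumptions you have not stated. First, monotonicity of $\Delta\phi_i$ in agent $i$'s own return does not make agents migrate toward the role range of the higher-$\mu_k$ subgroup. Suppose success raises $\phi_i$ and failure lowers it, and the high-$\phi$ range is the better one. Then the low-return agents typical of the low-$\phi$ range drift further away from it. Membership transfer therefore requires the role update to be directed by the comparison of $\hat\mu_k$ with $\hat\mu_l$. Otherwise, read the role channel as the paper does: as role privilege (more activity for higher-return agents), not as migration. Second, your coupling is a comparative-statics statement (one subgroup at two return levels), while the claim is cross-sectional ($k$ versus $l$ within one run). Bridging the two needs the allocation rule to be symmetric across subgroups, i.e.\ the same weight function applied to each subgroup's estimate. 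Monotonicity of each $w_k$ in its own argument is not enough.

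On the positive side, several of your added conditions are genuinely necessary, and the paper's two-sentence sketch leaves all of them implicit:
\begin{itemize}
\item requiring the remaining channels to be neutral, since with ``or'' in the hypothesis an anti-monotone channel could cancel a monotone one;
\item requiring the subgroups to start from comparable states;
\item flagging the gap between $\hat\mu_k$ and $\mu_k$.
\end{itemize}
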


\begin{proof}[Proof sketch]
Any monotone policy linking gain to donation retention, survival, role privilege, or communal spending creates a positive feedback path between marginal return and future activity share. The boundedness conditions prevent unlimited concentration, but not directional redistribution.
\end{proof}

\subsection{A Phase Interpretation}
The three regimes can be summarized in a two-signal phase view with axes:
\begin{itemize}
    \item recent normalized return or success level;
    \item communal surplus level.
\end{itemize}
Low return and low surplus yield contraction or survival mode. Moderate return and low-to-moderate surplus yield conservative maintenance. High return with high surplus yields reinvestment. High heterogeneity of subgroup returns with nontrivial surplus yields redistribution-dominant behavior. This phase interpretation is one of the main theoretical identities produced by the paper: MMAO is not merely a bag of adaptive tricks, but a resource-regulated regime-switching system.

\begin{figure}[t]
\centering
\resizebox{\columnwidth}{!}{%
\begin{tikzpicture}[x=0.9cm,y=0.9cm,>=latex]
    \draw[->,thick] (0,0) -- (5.8,0) node[below] {Recent normalized return};
    \draw[->,thick] (0,0) -- (0,4.9) node[left] {Communal surplus};
    \draw[dashed] (2.8,0) -- (2.8,4.7);
    \draw[dashed] (0,2.35) -- (5.6,2.35);

    \node[align=center] at (1.4,1.15) {\small Contraction\\\small or survival};
    \node[align=center] at (4.2,1.15) {\small Conservative\\\small maintenance};
    \node[align=center] at (1.4,3.55) {\small Buffered\\\small reserve};
    \node[align=center] at (4.2,3.6) {\small Reinvestment\\\small and redistribution};

    \draw[rounded corners=2pt, thick, blue!60] (3.15,2.75) rectangle (5.25,4.35);
    \node[blue!60!black, align=center] at (4.2,2.5) {\small most distinctive\\\small MMAO regime};
\end{tikzpicture}%
}
\caption{Conceptual phase interpretation of the MMAO loop. The framework is most distinctive in the upper-right regime, where positive recent return and nontrivial communal surplus support reinvestment and endogenous redistribution of search effort.}
\label{fig:phase}
\end{figure}

\section{What Is Generic and What Is Not}
The analysis so far supports a useful separation.

The following behaviors are generic consequences of the loop:
\begin{itemize}
    \item boundedness of internal resources under bounded-gain and bounded-spending policies;
    \item turnover-regulated regeneration of the active population;
    \item contraction under sustained net resource deficit;
    \item reinvestment under sustained communal surplus;
    \item directional search redistribution under heterogeneity of marginal returns.
\end{itemize}

By contrast, the following remain implementation specific:
\begin{itemize}
    \item the exact geometry of exploratory and exploitative moves;
    \item whether communal surplus is spent on spawning, local intensification, archive replay, or role-specific reinforcement;
    \item the exact responsiveness of role drift to recent success or pressure indicators;
    \item the finite-sample performance ranking against specialist baselines on any given benchmark family.
\end{itemize}

\begin{remark}
This distinction is crucial for the MMAO research line. Without it, every empirical observation risks being mistaken for a framework theorem. The purpose of a mechanism paper is precisely to stop that slippage.
\end{remark}

\section{Implications for Parameter-Light MMAO}
One of the long-term ambitions in the MMAO line is to move toward a more parameter-light, eventually quasi-parameter-free framework. The present analysis helps clarify what that ambition can and cannot mean.

The boundedness results suggest that several adaptive quantities do not need independently tuned schedules if they are already regulated by the loop. In particular, population size, role allocation, and communal spending intensity can be interpreted as \emph{state-derived variables} rather than as externally scripted time schedules. This does not make all constants disappear, but it changes their status. Thresholds and gains become structural coefficients in a closed-loop system, not separate hand-tuned modules layered on top of the search.

At the same time, the theory also explains why a fully parameter-free claim is still premature. The phase behavior depends on the sensitivity of donation, decay, and spending policies to observed gains. If these couplings are badly scaled, the same closed loop can become too conservative or too diffusive while remaining formally bounded. In other words, boundedness is necessary for mature self-regulation, but it is not sufficient for universally good calibration.

This suggests a principled roadmap. The right next step is not to keep adding external adaptive gadgets, but to derive more of the remaining constants from the same bookkeeping logic: scale adaptation from recent gain dispersion, spending intensity from communal budget pressure, and turnover thresholds from population-wide energy percentiles. This perspective is also consistent with broader attempts to move from hand-tuned modules toward learned or state-derived control in evolutionary computation \cite{jiang2023knowledge,cho2025configx}. The theory paper therefore does not solve the parameter-free problem outright, but it gives a cleaner criterion for future progress: every parameter that remains should either define the resource loop itself or be derivable from loop state.

\section{Mechanism-Validation Study Design}
\subsection{Purpose}
The experiment layer in this paper is intentionally small and mechanism centered (see GitHub repository \texttt{mmao}\footnote{\url{https://github.com/wolfbrother/mmao}} or PyPI package \texttt{mmao-opt}\footnote{\url{https://pypi.org/project/mmao-opt/}}). Its purpose is not to establish leaderboard performance, which has already been addressed more directly in broader benchmarking and representative domain-derivation studies \cite{xu2026largescale,xu2026mmaodyn}. Instead, the experiments should test whether the theoretical regime predictions actually appear in representative realizations of MMAO.

\subsection{Representative Realizations}
Two mature realizations are sufficient for this purpose:
\begin{itemize}
    \item a continuous MMAO instance on representative continuous landscapes such as sphere and Rastrigin;
    \item a discrete MMAO instance on representative combinatorial problems such as TSP or MKP.
\end{itemize}
These two cases are enough to test whether the resource-loop claims survive across different state encodings and gain-generation mechanisms.

\subsection{Research Questions for Mechanism Validation}
The mechanism-validation experiments should address four questions.
\begin{itemize}
    \item \textbf{MV1:} Do communal budget, private energy, role state, and active population size remain bounded in practice over long runs?
    \item \textbf{MV2:} Do low-yield regimes exhibit visible contraction and floor-protected survival?
    \item \textbf{MV3:} Do surplus-producing regimes exhibit stronger admissions or broader search reinvestment?
    \item \textbf{MV4:} Do ablations that remove communal sharing, turnover, or role drift suppress the predicted regime behaviors?
\end{itemize}

\subsection{Metrics}
The reporting layer should focus on mechanism variables rather than only objective values:
\begin{itemize}
    \item mean and quantile bands of communal budget $B_t$;
    \item mean private energy and energy dispersion;
    \item active population size $N_t$;
    \item mean role state and role-state dispersion;
    \item turnover count, admission count, and subgroup resource shares;
    \item occupancy ratio of contraction, maintenance, reinvestment, and redistribution regimes.
\end{itemize}

\subsection{Minimal Result Package}
To make the mechanism-validation study publication ready rather than merely aspirational, a minimal result package should be reported.
\begin{itemize}
    \item One boundedness table summarizing observed minima, maxima, and terminal values of $B_t$, mean $E_t$, mean $\phi_t$, and $N_t$ across the continuous and discrete realizations.
    \item One regime-occupancy table reporting the proportion of iterations assigned to contraction, maintenance, reinvestment, and redistribution modes under at least one easy and one hard instance.
    \item One trajectory figure for the continuous side and one for the discrete side, each plotting objective trace together with $B_t$, $N_t$, and mean $\phi_t$.
    \item One ablation figure or table showing that removing communal sharing, turnover, or role drift selectively weakens the corresponding predicted regime behavior.
\end{itemize}
This package is intentionally small. Its job is to validate the mechanism claims of the paper, not to re-run the entire empirical agenda of the larger benchmark study \cite{xu2026largescale}.

\subsection{Ablation Design}
The following endogenous ablations are especially informative:
\begin{itemize}
    \item \texttt{NoResourceSharing}: removes communal pooling while preserving local search dynamics;
    \item \texttt{FixedPopulation}: disables admissions and removals;
    \item \texttt{FixedRoleState}: removes role drift;
    \item \texttt{NoReinvestment}: allows communal accumulation but blocks surplus-triggered spending.
\end{itemize}

Each ablation suppresses one theoretical pathway. If the theory is correct, removing the pathway should selectively weaken the corresponding regime behavior rather than collapsing every aspect of performance indiscriminately.

\subsection{Implemented Mechanism-Validation Package}
To move beyond a purely proposed validation layer, this paper reuses mature mechanism-diagnostic assets from the broader benchmark study \cite{xu2026largescale} and extracts a compact cross-domain package centered on one representative continuous setting and one representative discrete setting. The continuous side uses the $10D$ CEC-style \texttt{F12017} slice together with the ablations \texttt{NoRoleDrift}, \texttt{NoEliteReinvest}, and \texttt{WeakSuccessFeedback}. The discrete side uses \texttt{eil51} together with the ablations \texttt{NoEdgeMemory}, \texttt{NoGuidedReinvest}, and \texttt{WeakSuccessFeedback}. This is not intended as a new benchmark campaign. Its only purpose is to test whether the theory-level state variables and regime differences are visible in real MMAO trajectories.

\begin{figure*}[t]
\centering
\includegraphics[width=\textwidth]{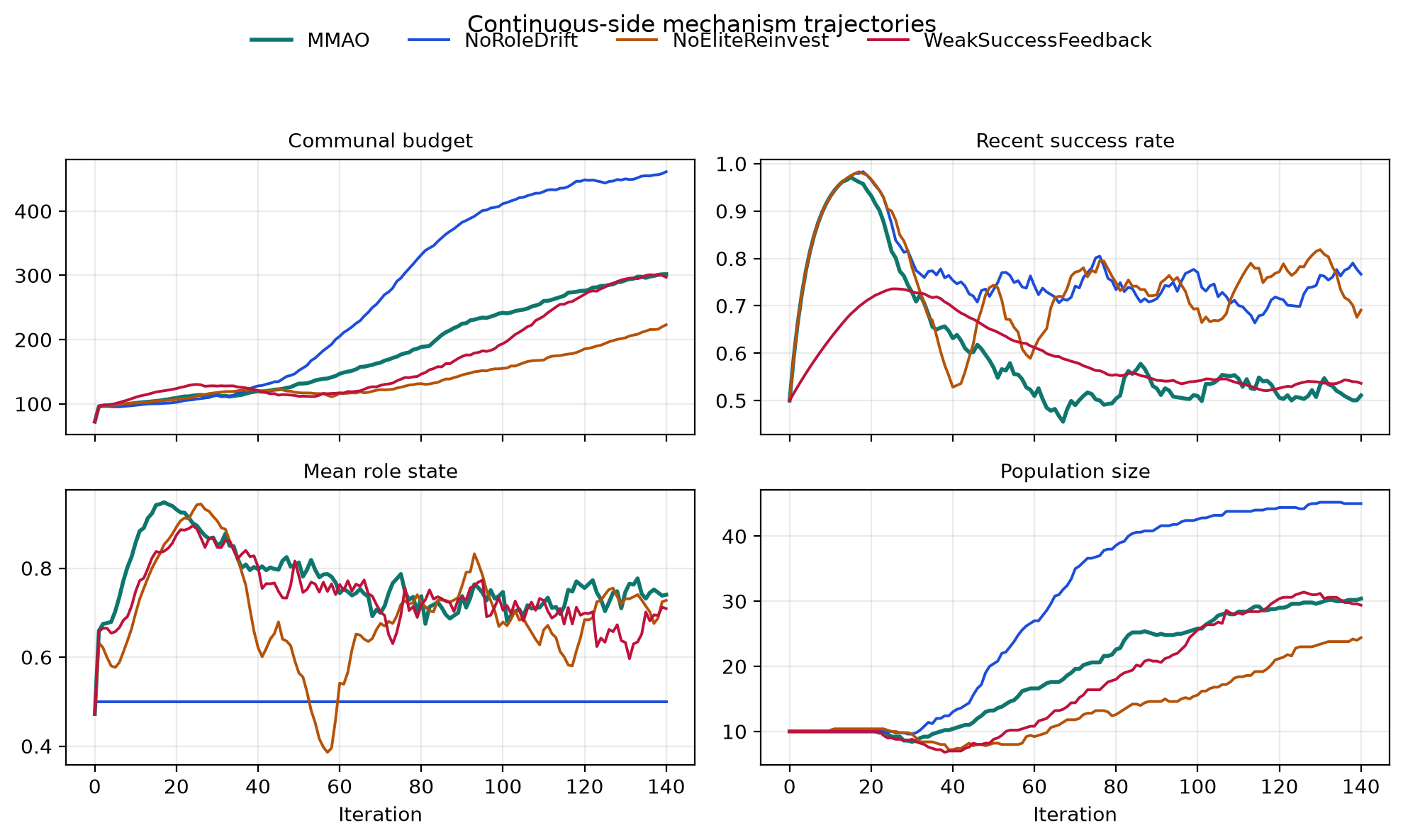}
\caption{Continuous-side mechanism trajectories for a representative MMAO slice and three endogenous ablations. The full MMAO realization exhibits bounded but growing communal budget, nontrivial role drift, and controlled population expansion. Removing role drift freezes the role-state channel and leads to much larger population and budget accumulation, whereas removing elite reinvestment suppresses communal growth and limits expansion.}
\label{fig:contmech}
\end{figure*}

\begin{figure*}[t]
\centering
\includegraphics[width=\textwidth]{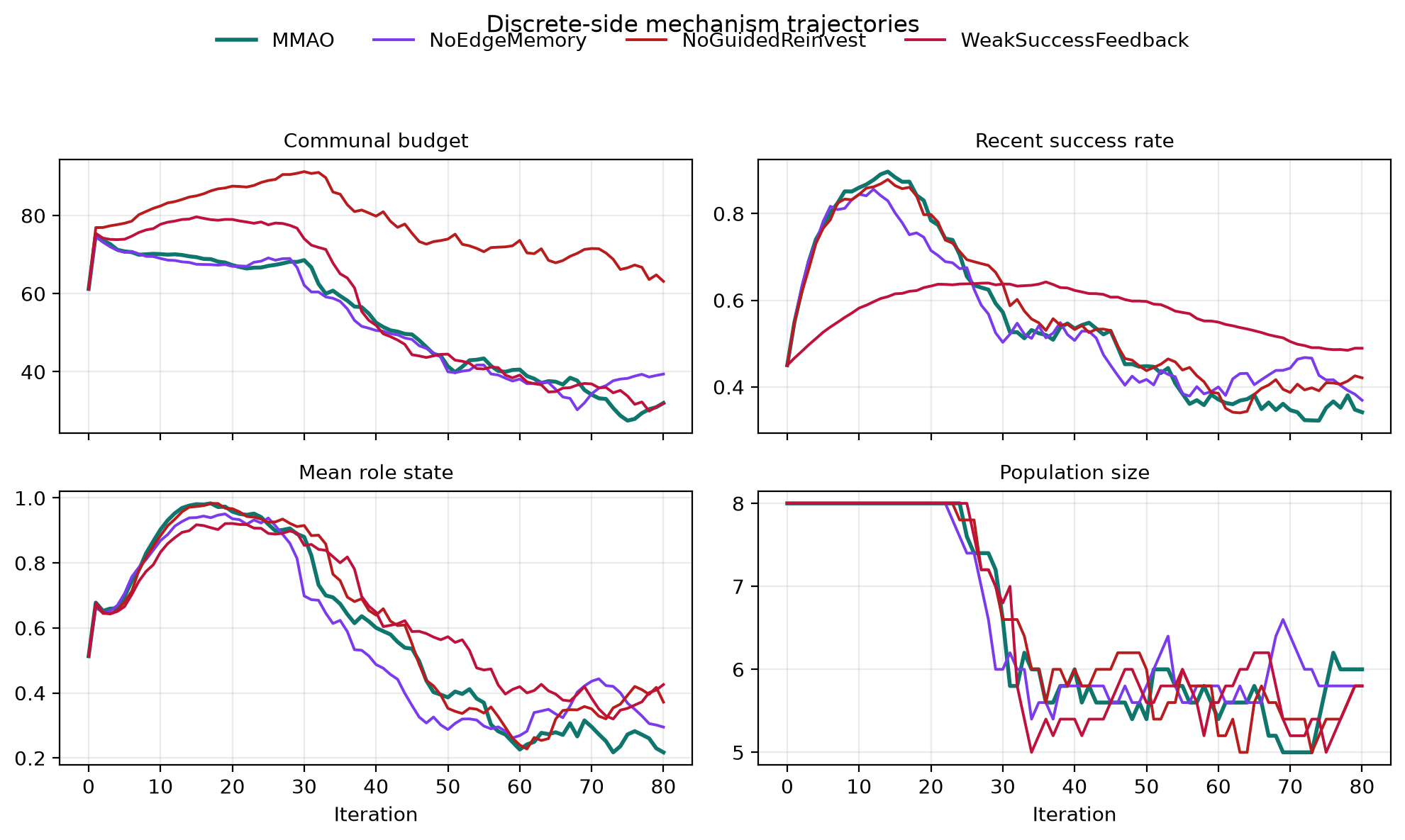}
\caption{Discrete-side mechanism trajectories for \texttt{eil51} and three endogenous ablations. The same state variables remain bounded but evolve differently from the continuous case: communal budget decays rather than accumulates, role state cools sharply, and the active population contracts toward a narrow band. This supports the paper's central distinction between framework-generic boundedness and domain-specific trajectory shape.}
\label{fig:discmech}
\end{figure*}

\begin{table*}[t]
\caption{Observed boundedness summary from the implemented mechanism-validation package.}
\label{tab:boundedness}
\centering
\scriptsize
\resizebox{\textwidth}{!}{%
\begin{tabular}{llcccccccccccc}
\toprule
Side & Method & $B_{\min}$ & $B_{\max}$ & $B_{\text{final}}$ & $s_{\min}$ & $s_{\max}$ & $s_{\text{final}}$ & $\phi_{\min}$ & $\phi_{\max}$ & $\phi_{\text{final}}$ & $N_{\min}$ & $N_{\max}$ & $N_{\text{final}}$ \\
\midrule
continuous & MMAO & 72.25 & 301.80 & 301.80 & 0.4554 & 0.9716 & 0.5108 & 0.4723 & 0.9481 & 0.7403 & 8.4 & 30.4 & 30.4 \\
continuous & NoRoleDrift & 72.25 & 461.35 & 461.35 & 0.5000 & 0.9830 & 0.7666 & 0.5000 & 0.5000 & 0.5000 & 9.6 & 45.2 & 45.0 \\
continuous & NoEliteReinvest & 72.25 & 223.09 & 223.09 & 0.5000 & 0.9829 & 0.6908 & 0.3861 & 0.9441 & 0.7275 & 7.2 & 24.4 & 24.4 \\
continuous & WeakSuccessFeedback & 72.25 & 300.94 & 297.06 & 0.5000 & 0.7357 & 0.5363 & 0.4723 & 0.8953 & 0.7091 & 6.8 & 31.4 & 29.4 \\
tsp & MMAO & 27.50 & 75.21 & 32.02 & 0.3223 & 0.8968 & 0.3416 & 0.2173 & 0.9833 & 0.2173 & 5.0 & 8.0 & 6.0 \\
tsp & NoEdgeMemory & 30.28 & 74.52 & 39.41 & 0.3693 & 0.8562 & 0.3693 & 0.2619 & 0.9507 & 0.2950 & 5.4 & 8.0 & 5.8 \\
tsp & NoGuidedReinvest & 61.20 & 91.24 & 63.14 & 0.3401 & 0.8787 & 0.4210 & 0.2277 & 0.9829 & 0.3718 & 5.0 & 8.0 & 5.8 \\
tsp & WeakSuccessFeedback & 29.94 & 79.66 & 31.87 & 0.4500 & 0.6422 & 0.4893 & 0.3200 & 0.9213 & 0.4259 & 5.0 & 8.0 & 5.8 \\
\bottomrule
\end{tabular}
}
\end{table*}

\begin{table}[t]
\caption{Ablation-effect summary used for mechanism reading. Lower is better for both columns.}
\label{tab:ablationeffects}
\centering
\scriptsize
\begin{tabular}{lcc}
\toprule
Variant & Cont. mean error & TSP mean gap (\%) \\
\midrule
MMAO & 3742.37 & 6.04 \\
NoEdgeMemory & -- & 5.88 \\
NoEliteReinvest & 80617.70 & -- \\
NoGuidedReinvest & -- & 5.67 \\
NoRoleDrift & 4044.08 & -- \\
WeakSuccessFeedback & 4133.13 & 5.47 \\
\bottomrule
\end{tabular}
\end{table}

\begin{table}[t]
\caption{Observed regime occupancy in the implemented mechanism-validation package.}
\label{tab:occupancy}
\centering
\scriptsize
\begin{tabular}{lcccc}
\toprule
Variant & Contr. & Maint. & Reserve & Reinvest. \\
\midrule
Cont. MMAO & 0.00 & 0.50 & 0.00 & 0.50 \\
Cont. NoRoleDrift & 0.00 & 0.49 & 0.00 & 0.51 \\
Cont. NoEliteReinvest & 0.00 & 0.52 & 0.00 & 0.48 \\
Cont. WeakSuccessFeedback & 0.00 & 0.62 & 0.00 & 0.38 \\
TSP MMAO & 0.00 & 0.32 & 0.00 & 0.68 \\
TSP NoEdgeMemory & 0.00 & 0.36 & 0.00 & 0.64 \\
TSP NoGuidedReinvest & 0.00 & 0.00 & 0.00 & 1.00 \\
TSP WeakSuccessFeedback & 0.00 & 0.38 & 0.00 & 0.62 \\
\bottomrule
\end{tabular}
\end{table}

\subsection{Mechanism Reading}
The implemented package supports three theory-relevant observations.

First, boundedness is visible in practice. Table~\ref{tab:boundedness} shows that communal budget, mean role state, and active population size remain in compact ranges on both continuous and discrete sides, even though the actual trajectory shapes are quite different. This is exactly the distinction the theory claims: the loop induces bounded internal regulation generically, but not a universal time profile.

Second, Figures~\ref{fig:contmech} and \ref{fig:discmech} reveal clear redistribution differences once ablations are introduced. On the continuous side, removing role drift collapses one adaptation channel entirely and coincides with much stronger communal accumulation and much larger population growth. Removing elite reinvestment sharply suppresses communal budget growth and leaves the algorithm in a visibly narrower expansion regime. On the discrete side, the most notable effect is that the full method contracts toward a tighter population band while still maintaining a comparatively broad role-state excursion, whereas the ablations either preserve excess budget or keep the role state artificially elevated.

Third, the ablation-effect summary in Table~\ref{tab:ablationeffects} supports a cautious causal reading. The continuous result is especially clear: removing elite reinvestment worsens the representative error by more than an order of magnitude, which is exactly the kind of selective mechanism failure the theory predicts. The discrete result is more mixed, which is also informative. It suggests that boundedness and redistribution are generic consequences of the loop, but that the performance value of any one pathway remains domain dependent.

Fourth, the regime-occupancy summary in Table~\ref{tab:occupancy} makes the same point from a different angle. The selected representative runs are not contraction-dominant failure cases; they spend most of their time in maintenance and reinvestment windows, which is exactly why they are useful for exposing surplus handling and redistribution logic. Even within that favorable setting, weakening success feedback or guided reinvestment systematically shifts occupancy toward the more conservative maintenance band. This is the practical regime signature that the theory predicts.

These observations are also consistent with three broader empirical motifs that recur across MMAO realizations. In static benchmark settings, stronger variants tend to benefit mainly by converting communal reserves into sharper late-stage redistribution rather than by simply maintaining larger populations. In dynamic settings, the same loop is most credible when it reallocates effort after disruption rather than merely preserving pre-change momentum. In mixed-space classification settings, the most visible effect is often compactness-oriented budget use rather than overwhelming predictive dominance. The theory presented here does not prove those empirical claims, but it does explain why such different outward behaviors can still arise from one common bounded controller.

\section{Discussion}
The theory developed here is intentionally partial. It does not establish global convergence of the full MMAO system, nor does it claim that every bounded resource-regulated heuristic is automatically effective. Several limitations remain.

First, the analysis treats the gain signal as bounded and normalized but does not derive the gain distribution from problem geometry. This is acceptable at the framework level, but it means that the present theorems explain \emph{how} the loop regulates itself more clearly than \emph{when} it will dominate specialist baselines.

Second, the population process is analyzed as bounded and regenerative, but not through a full stationary-distribution characterization. Such a characterization may be possible for simplified MMAO variants, but would likely obscure the main conceptual result of the present paper.

Third, role drift is modeled abstractly as a projected bounded state. This is enough to establish compactness and redistribution logic, but not enough to derive exact exploration-exploitation schedules. That level of detail belongs to implementation papers.

Fourth, the mechanism-validation section now includes real cross-domain trajectory figures and boundedness summaries, but it is still intentionally compact. The present package is enough to validate the theoretical reading, not enough to replace the broader empirical role of the larger benchmark study \cite{xu2026largescale}.

Despite these limitations, the paper provides a useful conceptual upgrade for MMAO. The framework can now be described not only by metaphor or by benchmark tables, but also by a state-space interpretation with boundedness, regeneration, and regime behavior. This is precisely the kind of theoretical identity that a framework needs if it is to be seen as more than a one-off heuristic construction.

The paper also clarifies an important division of labor inside MMAO research. Implementation papers should determine which operators, memories, and proposal kernels are effective in a given domain. This theory paper asks a different question: which internal behaviors are generic consequences of the closed loop itself, regardless of whether the surrounding realization is continuous, discrete, dynamic, or mixed-space. By making that boundary explicit, the paper helps later empirical work make stronger claims without over-attributing every observed gain to theory.

\section{Conclusion}
This paper studied MMAO at the mechanism level. The central question was not whether one more benchmark could be won, but whether the metabolic resource loop has a framework-level meaning independent of any particular implementation. The answer supported here is yes, in a careful and limited sense. A broad class of MMAO-like update rules can be interpreted as bounded closed-loop search systems in which private energy, communal budget, role drift, and population turnover regulate each other endogenously. Under mild assumptions, the key internal states remain bounded, the active population behaves as a regenerative bounded process, and contraction, reinvestment, and search redistribution arise as identifiable operating regimes of the loop.

What has been established is therefore narrower than a complete theory of MMAO performance. The paper does not prove global convergence of the full adaptive algorithm, does not derive exact exploration-exploitation schedules from first principles, and does not show that any bounded metabolic instantiation is automatically competitive on difficult benchmarks. The mechanism-validation package should likewise be read as supportive evidence for the proposed interpretation, not as a definitive causal identification of every internal pathway.

The strongest conclusion is instead a theoretical identity claim: MMAO can be understood as a resource-regulated metaheuristic architecture whose core internal regulation behaviors are not merely narrative metaphors or benchmark accidents. That identity is still incomplete and should be strengthened by more formal proofs, simplified analyzable variants, stronger mechanism diagnostics, and broader cross-domain validation. But it is already strong enough to move MMAO beyond a purely empirical or metaphor-driven description and to clarify which parts of the framework are theoretically supported today.

The paper should therefore be read as a theory-of-identity contribution: it clarifies what the loop generically regulates and what remains implementation dependent, without overstating the present theoretical reach of the full adaptive family.

\appendices
\section{Additional Proof Details}
This appendix expands the proof sketches in the main text into a more explicit argument structure. The goal is still not a full textbook-style convergence proof for the complete adaptive system; rather, it is to make the boundedness and regime claims less schematic and more mathematically legible.

\subsection{Projected Affine Bounds}
The first step is to isolate the generic recursion structure used repeatedly in the paper.

\begin{lemma}[Uniform bound for projected affine recursion]
Let $\{y_t\}$ satisfy
\[
y_{t+1}=\Pi_{[0,\bar y]}\big((1-\rho_t)y_t+\eta_t\big),
\]
where $\rho_t\in[\rho_{\min},1)$ for some $\rho_{\min}>0$ and $|\eta_t|\le K$. Then
\[
0\le y_t\le \max\{y_0,\bar y,K/\rho_{\min}\}
\qquad \text{for all } t.
\]
\end{lemma}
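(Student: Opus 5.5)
The plan is a direct induction on $t$, with the projection doing most of the work. Write $M:=\max\{y_0,\bar y,K/\rho_{\min}\}$. The claim is that $0\le y_t\le M$ for every $t$. I will read the statement with the implicit convention that $y_0\in[0,\infty)$, which matches the initial conditions $E_i(0)\in[0,E_{\max}]$ and $B_0\in[0,B_{\max}]$ in the Proposition on basic state boundedness where the lemma is to be applied.

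\textbf{Base case and the step with finite $\bar y$.}
\begin{itemize}
\item For $t=0$ the bound holds by the assumption on $y_0$ and the definition of $M$.
\item For $t\ge 1$, $y_t$ is the image of a real number under $\Pi_{[0,\bar y]}$, so $y_t\in[0,\bar y]\subseteq[0,M]$ immediately.
\end{itemize}
When $\bar y<\infty$ this already gives the lemma. The term $K/\rho_{\min}$ is then redundant, and no property of $\rho_t$ or $\eta_t$ is used.

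\textbf{The step without an upper cap.} The term $K/\rho_{\min}$ matters when the projection is only onto the nonnegative axis, i.e.\ $\bar y=\infty$. This is the case used for $\Pi_B$ when no explicit budget cap is imposed. Here I drop $\bar y$ from the maximum and set $M'=\max\{y_0,K/\rho_{\min}\}$. Assume $0\le y_t\le M'$.
\begin{itemize}
\item Since $\rho_t<1$, the coefficient $1-\rho_t$ is nonnegative. Hence
\[
(1-\rho_t)y_t+\eta_t\le(1-\rho_t)M'+K=M'-\rho_tM'+K.
\]
\item Since $\rho_t\ge\rho_{\min}$ and $M'\ge K/\rho_{\min}$, we have $\rho_tM'\ge\rho_{\min}M'\ge K$. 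So the right-hand side is at most $M'$.
\item Projection onto $[0,\infty)$ maps numbers $\le M'$ to $[0,M']$, because $M'\ge0$. This closes the induction.
\end{itemize}
The same argument works verbatim with any finite cap. This explains why the stated bound is a maximum of all three quantities.

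\textbf{Expected obstacle.} There is no genuine analytic difficulty. The only delicate points are bookkeeping ones:
\begin{itemize}
\item Use $\rho_t<1$, so that $1-\rho_t\ge 0$ and the inductive upper bound on $y_t$ may be substituted monotonically.
\item Use $\rho_{\min}>0$, so that $K/\rho_{\min}$ is finite.
\item Make the nonnegativity of $y_0$ explicit. Without it the lower bound $0\le y_t$ fails at $t=0$, although it still holds for all $t\ge1$ by projection.
\end{itemize}
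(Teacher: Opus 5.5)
Your proof is correct and is essentially the paper's argument: nonnegativity comes from the projection, and the decay $\rho_t\ge\rho_{\min}$ against forcing $|\eta_t|\le K$ keeps the pre-projection value at most $\max\{y_0,K/\rho_{\min}\}$. Your invariant-interval induction states this more completely than the paper, which only writes out the case $y_t\ge K/\rho_{\min}$ and leaves $y_0\ge 0$ implicit. One contextual remark is off: the uncapped budget recursion $B_{t+1}=\Pi_B(B_t+\eta_t)$ has no decay ($\rho_t\equiv 0$), so the lemma does not cover it, which is why the paper falls back on the drift condition of Theorem 1 there; the uncapped case of the lemma is relevant instead to energy updates with $\inf_t\alpha_t>0$.
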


\begin{proof}
Because projection preserves the interval $[0,\bar y]$, nonnegativity is immediate. For the upper bound, use
\[
y_{t+1}\le (1-\rho_{\min})y_t+K.
\]
If $y_t\ge K/\rho_{\min}$, then
\[
(1-\rho_{\min})y_t+K \le y_t.
\]
Hence the scalar recursion cannot increase above $\max\{y_0,K/\rho_{\min}\}$ before projection, and projection can only reduce it further or cap it by $\bar y$.
\end{proof}

For Proposition 1, the key observation is that each energy update is a projected affine recursion with bounded additive forcing. Writing
\[
E_i(t+1)=\Pi_E\big((1-\alpha_t)E_i(t)+\zeta_i(t)\big),
\]
where $\zeta_i(t)=u_i(t)-d_i(t)-c_i(t)$, Assumptions 1 and 2 imply
\[
|\zeta_i(t)| \le U_{\max}+D_{\max}+C_{\max}=:K_E.
\]
Applying the lemma with $y_t=E_i(t)$ yields the claimed uniform bound whenever the implementation enforces a positive minimal decay or an explicit projection cap. The same argument applies to the communal budget recursion with
\[
B_{t+1}=\Pi_B(B_t+\eta_t), \qquad |\eta_t|\le N_{\max}D_{\max}+S_{\max}=:K_B.
\]
If the budget update is implemented with explicit bounded projection, compactness is immediate. If instead only nonnegativity projection is used, the dissipativity condition of Theorem 1 supplies the missing upper control.

\subsection{Drift View of Theorem 1}
We now make the communal-budget argument slightly more formal.

\begin{lemma}[Negative drift outside a finite budget region]
Assume the hypotheses of Theorem 1 and let $C=[0,B^\dagger]$. Then for $V(B)=B$,
\[
\mathbb{E}[V(B_{t+1})-V(B_t)\mid B_t]\le -\varepsilon.
\]
\end{lemma}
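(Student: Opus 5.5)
The plan is to read the drift inequality as a statement about $B_t\ge B^\dagger$, i.e.\ about states on or above the boundary of the small set $C=[0,B^\dagger]$, exactly the range where the hypothesis of Theorem 1 is assumed. I will write the one-step increment in the form
\[
B_{t+1}-B_t=\Pi_B\big(B_t+\eta_t\big)-B_t,\qquad \eta_t=\sum_{i=1}^{N_t}d_i(t)-s_t ,
\]
and reduce the claim to a pointwise comparison between $\Pi_B(B_t+\eta_t)$ and the unprojected value $B_t+\eta_t$. After that comparison, the conditional-expectation step is just the hypothesis of Theorem 1.

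\textbf{Step 1: handle the projection.} The upper part of $\Pi_B$, a cap at an explicit $B_{\max}$ when one is imposed, only lowers the value, so it can only help the inequality. The lower part, the clip at zero, is the only place where projection could raise $B_{t+1}$ above $B_t+\eta_t$. I will close this off with the bookkeeping convention already implicit in the model: $s_t$ denotes \emph{realized} communal spending. The loop cannot spend funds it does not hold, so
\[
s_t\le B_t+\sum_i d_i(t)\quad\text{and hence}\quad B_t+\eta_t\ge 0 .
\]
This is the meaning of the phrase ``up to the projection at zero'' in the sketch of Theorem 1. With this convention, for every realization,
\[
\Pi_B(B_t+\eta_t)\le B_t+\eta_t .
\]

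\textbf{Step 2: take conditional expectations.} Subtracting $B_t$ from the pointwise bound and conditioning on $B_t$ gives
\[
\mathbb{E}[V(B_{t+1})-V(B_t)\mid B_t]\le \mathbb{E}[\eta_t\mid B_t]=-\mathbb{E}\Big[s_t-\sum_i d_i(t)\,\Big|\,B_t\Big]\le-\varepsilon
\]
whenever $B_t\ge B^\dagger$, by the hypothesis of Theorem 1. Integrability of every term is automatic, since $|\eta_t|\le K_B$ by Assumptions 2--3.

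\textbf{Main obstacle.} The delicate point is Step 1. If an implementation does allow nominal spending to overshoot the available budget, with the clip at zero absorbing the excess, then the pointwise bound fails. In that case I would argue instead via
\[
\Pi_B(x)-x=(x)^-\le (s_t-B_t)^+ ,
\]
and this correction vanishes identically once $B_t\ge S_{\max}$. Since the statement is posed on all of $\{B_t\ge B^\dagger\}$, I would adopt the realized-spending convention of Step 1 as the operative reading. It is exactly what makes $\eta_t$ the true net flow in the communal-budget recursion, and it gives the inequality on the whole stated region.
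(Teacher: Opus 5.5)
Your argument is correct and is the paper's argument: compare $\Pi_B(B_t+\eta_t)$ pointwise with $B_t+\eta_t$, subtract $B_t$, condition on $B_t$, and invoke the hypothesis of Theorem 1 on $\{B_t\ge B^\dagger\}$. You are more careful than the paper at the zero clip, where its claim that projection ``cannot increase the argument'' fails (e.g.\ with $B^\dagger<S_{\max}$, $d_i\equiv 0$, $s_t\equiv S_{\max}=\varepsilon$ and $B_t\in[B^\dagger,S_{\max})$ the drift is $-B_t>-\varepsilon$), so a condition like your realized-spending convention or $B^\dagger\ge S_{\max}$ is genuinely needed; state it as an added hypothesis rather than as implicit in the model, since the zero clip in $\Pi_B$ suggests the paper allows overshoot.
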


\begin{proof}
For $B_t\ge B^\dagger$, the hypothesis gives
\[
\mathbb{E}\!\left[s_t-\sum_{i=1}^{N_t} d_i(t)\mid B_t\right]\ge \varepsilon.
\]
Using the update
\[
B_{t+1}=\Pi_B\!\left(B_t+\sum_{i=1}^{N_t}d_i(t)-s_t\right)
\]
and the fact that projection onto the feasible budget interval cannot increase the argument, we obtain
\[
\mathbb{E}[B_{t+1}-B_t\mid B_t]\le -\varepsilon
\]
for all $B_t\notin C$.
\end{proof}

Summing the drift inequality until the hitting time of $C$ gives a standard Foster-Lyapunov estimate: the expected excursion length outside $C$ is finite, and therefore the budget process has bounded first moments and repeatedly re-enters the low-budget region. The result is intentionally stated in expectation rather than almost sure convergence form because the full MMAO loop contains state-dependent admissions, removals, and role couplings that make a stronger claim harder to justify at the framework level.

\subsection{Population Regeneration}
To sharpen Theorem 2, define the refreshed interior set
\begin{equation}
\mathcal{R}=\{(N_t,E_t): N_t\in[N_{\min},N_{\max}]\},
\end{equation}
and call a state in $\mathcal{R}$ \emph{refreshed} if at least one agent has been admitted or reset within the last $L$ steps, for some fixed finite refresh horizon $L$.

\begin{lemma}[Refresh recurrence under bounded turnover policy]
Suppose there exists $p_{\mathrm{rm}}>0$ such that any agent staying below the survival threshold for $L$ consecutive steps is removed with probability at least $p_{\mathrm{rm}}$. Suppose also that admissions occur whenever $N_t<N_{\min}$ or whenever surplus-triggered expansion is activated. Then the process returns to $\mathcal{R}$ infinitely often in expectation.
\end{lemma}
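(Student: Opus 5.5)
The plan is to prove the statement in two parts. First I note that membership in $\mathcal{R}$ itself is automatic: by Assumption 3 and Proposition 1, $N_t\in[N_{\min},N_{\max}]$ for every $t$. The substantive content is therefore the \emph{refreshed} property, namely that admission or reset events occur infinitely often. I would fix $L$ and partition time into consecutive blocks $I_k=\{kL,\dots,(k+1)L-1\}$. The aim is a uniform lower bound $q>0$ on the conditional probability, given the past up to time $kL$, that some admission or reset occurs in $I_k\cup I_{k+1}$. A conditional Borel--Cantelli argument (Lévy's extension), or equivalently a geometric-trials bound, then shows that refresh events happen infinitely often. It also follows that the expected waiting time between refreshes is at most $2L/q$, which is the sense of ``infinitely often in expectation'' I would state.

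For the block bound I would split according to the state at the start of a block.

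\textbf{Case (a):} $N_t<N_{\min}$ at some point, or surplus-triggered expansion is activated. Then by hypothesis an admission occurs, and the block contains a refresh with probability one.

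\textbf{Case (b):} some agent has energy below the threshold $\theta_t$ for $L$ consecutive steps. Then it is removed with probability at least $p_{\mathrm{rm}}$. If that removal drops the population below $N_{\min}$, we are back in case (a). If not, I would invoke the paper's turnover convention that removed slots are respawned or reset when the population policy allows, and treat this as a refresh.

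\textbf{Case (c):} every agent stays above threshold and no admission is triggered. This is the main obstacle, because then no refresh is forced, and the lemma as stated cannot exclude an all-healthy, static population that never turns over. To handle it I would add an explicit regularity condition in the spirit of Theorem 2's phrase ``sufficiently long.'' I would try two forms. The first is a decay-driven condition: by the Proposition 1 recursion, if $\alpha_t\ge\rho_{\min}>0$ and the net private input is nonpositive, energies decay toward $K_E/\rho_{\min}$. If the threshold exceeds this level, some agent must cross below $\theta_t$ within a bounded number of steps, which reduces case (c) to case (b). If this fails, I would use the second form: the communal process of Theorem 1 re-enters $[0,B^\dagger]$ and, by Lemma 2, accumulates surplus again, so expansion is activated with positive probability. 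Either route gives a constant $q\ge\min\{1,p_{\mathrm{rm}}\}\cdot q_0$, and the block argument then closes the proof.
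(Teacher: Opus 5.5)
Your opening observation matches the paper's first step. Under the literal definition $\mathcal{R}=\{(N_t,E_t):N_t\in[N_{\min},N_{\max}]\}$ it already settles the statement, because the envelope keeps the process in $\mathcal{R}$ at every $t$.

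For the substantive reading (refreshed states recur), your case (c) is a genuine counterexample under the stated hypotheses, not just an obstacle to your method. Suppose every agent keeps $E_i(t)\ge\theta_t$ for all $t$, and the surplus trigger is never met (e.g.\ $s_t=0$ with the trigger unreached, or $N_t\equiv N_{\max}$). Then every hypothesis holds and no admission or reset ever occurs. The paper's proof misses this. It only argues that a \emph{stale} no-refresh configuration cannot persist. In the branch where a removal does not breach the floor, it relies on surplus admissions that merely ``may'' occur. Your diagnosis is therefore sharper than the paper's argument. The consequence is that you are proving a modified lemma with an added hypothesis, and you should state it that way.

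Your repairs also need work.
\begin{itemize}
\item In case (b), the ``respawn when the policy allows'' convention is not a hypothesis, and a removal alone is not a refresh under the paper's definition. Pure counting avoids both your split and the paper's. From $N_{t+1}=N_t-R_t+A_t$ and the envelope, $\sum_{s<t}R_s-\sum_{s<t}A_s=N_0-N_t\le N_{\max}-N_{\min}$ for every $t$. Hence infinitely many removals force infinitely many admissions.
\item In case (c), the budget route fails. Lemma 2 gives negative drift only on $\{B_t\ge B^\dagger\}$ and says nothing about the budget climbing back into surplus. Re-entry into $[0,B^\dagger]$ is compatible with expansion never triggering, and Theorem 1's drift hypothesis is not assumed here anyway.
\item The decay route conflates two conditions. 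With nonpositive net input, $E_i(t+1)\le(1-\rho_{\min})E_i(t)$ drives energies to $0$, not toward $K_E/\rho_{\min}$. With only $|\zeta_i|\le K_E$, the right condition is $\inf_t\theta_t>K_E/\rho_{\min}$. Under it, $[0,K_E/\rho_{\min}+\delta]$ is invariant for small $\delta>0$, and every agent becomes permanently sub-threshold after a bounded time. Either condition works, but each is a new assumption.
\item Your block argument needs $q$ uniform over histories, and nothing in the lemma supplies it. The clean version is to assume that sub-threshold runs of length $L$ occur infinitely often almost surely. Conditional Borel--Cantelli with $p_{\mathrm{rm}}$ then gives infinitely many removals, and the counting identity gives infinitely many admissions. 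Your waiting-time bound $2L/q$ additionally requires a uniform per-block probability of such a run.
\end{itemize}
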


\begin{proof}
The envelope condition gives $N_t\in[N_{\min},N_{\max}]$ for all $t$. Whenever a nonempty stale subset persists below threshold for $L$ steps, at least one removal occurs with probability at least $p_{\mathrm{rm}}$. If this removal pushes the population below the protected floor, admissions are forced; if not, surplus-triggered admissions may still occur under the stated policy. In either case, refresh events have positive probability on each sufficiently long stale interval. Therefore the process cannot remain forever in a stale no-refresh configuration without contradicting the assumed turnover mechanism, and returns to $\mathcal{R}$ recur.
\end{proof}

This is the precise sense in which the population process is regenerative here: not in the strongest renewal-theoretic sense of i.i.d. cycles, but in the weaker and more appropriate sense of repeated returns to a bounded refreshed set.

\subsection{Why the Regime Propositions Need Only One-Sided Drift}
The regime propositions are not asymptotic theorems about permanent phases; they are finite-window structural statements. Their mathematical burden is therefore lighter. For contraction, one needs only a persistent negative drift of private energy for a nontrivial fraction of agents and insufficient admissions to offset removals. For reinvestment, one needs only monotonicity of spending in communal surplus together with nonnegative recent returns. For redistribution, one needs only that future access to resources depends monotonically on recent marginal-return estimates. These are weaker than full equilibrium assumptions, which is appropriate because the goal is to explain observed search modes rather than to prove that the entire system converges to one stationary behavior class.

\bibliographystyle{IEEEtran}
\bibliography{Ref}

\end{document}